\documentclass[11pt, a4paper, copyright]{simular}

\usepackage{url}
\usepackage{nicefrac}
\usepackage{wrapfig}
\usepackage{array}
\usepackage{multirow}
\usepackage{fontawesome5}
\usepackage{tikz}
\usetikzlibrary{matrix,calc,decorations.pathreplacing,positioning,arrows.meta}

\newcommand{\red}[1]{\textcolor{red!70!black}{#1}}

\newcommand{\blue}[1]{\textcolor{blue!70!black}{#1}}

\newtheorem{theorem}{Theorem}
\newtheorem{assumption}{Assumption}
\newtheorem{lemma}{Lemma}

\title{Stateful Reasoning via Insight Replay}

\author[1]{Bin Lei}
\author[1]{Caiwen Ding}
\author[2]{Jiachen Yang}
\author[2]{Ang Li}
\author[2]{Xin Eric Wang}

\affil[1]{University of Minnesota}
\affil[2]{Simular AI}

\correspondingauthor{Bin Lei (lei00126@umn.edu), Xin Eric Wang (eric@simular.ai)}

\usepackage[numbers, sort]{natbib}
\uselogo{}
\renewcommand{\today}{}

\begin{abstract}
Chain-of-Thought (CoT) reasoning has become a foundation for eliciting multi-step reasoning in large language models, but recent studies show that its benefits do not scale monotonically with chain length: while
longer CoT generally enables a model to tackle harder problems, on a given problem, accuracy typically increases with CoT length up to a point, after which it declines. We identify a major cause of this phenomenon: as the CoT grows, the model's attention to critical insights produced earlier in the trace gradually weakens, making those insights progressively less accessible
when they are most needed.
Therefore, we propose
\textbf{InsightReplay}, a stateful reasoning approach in which the
model periodically extracts critical insights from its reasoning trace
and replays them near the active generation frontier, keeping them
accessible as the reasoning scales. Extensive experiments on a
$\mathbf{2}\!\times\!\mathbf{3}\!\times\!\mathbf{4}$ benchmark grid,
covering model scales $\{\text{8B}, \text{30B}\}$, model families
$\{\text{Qwen3.5}, \text{DeepSeek-R1-Distill-Qwen}, \text{Gemma-4}\}$,
and reasoning benchmarks $\{\text{AIME}, \text{HMMT}, \text{GPQA Diamond},
\text{LiveCodeBench v5}\}$, show that 3-round InsightReplay yields
accuracy gains across \textbf{all 24 settings}, with an averaged
improvement of $\mathbf{+1.65}$ points over standard CoT, and a largest
single-setting gain of $\mathbf{+9.2}$ points on R1-Distill-32B's
LiveCodeBench v5 subset.
Our results suggest that the effectiveness of test-time scaling depends not only on how much a model reasons, but also on whether critical intermediate insights remain accessible throughout long reasoning trajectories.

\medskip
\centerline{%
\href{https://research.simular.ai/insight-replay/}{\raisebox{-0.22em}{\includegraphics[height=1em]{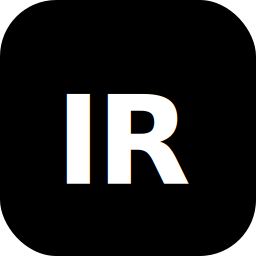}}\ Project page}%
\quad\textbar\quad
\href{https://github.com/simular-ai/insight-replay}{\faGithub\ Code}%
}
\end{abstract}

\begin{document}
\maketitle

\section{Introduction}
Chain-of-Thought (CoT) reasoning is a key mechanism for eliciting reasoning in large language models. By explicitly generating intermediate steps, CoT decomposes difficult objectives into smaller sub-problems and has become a foundation for test-time scaling: increasing reasoning-token budgets often improves performance on difficult tasks~\citep{wei2022chain,wang2023selfconsistency,guo2025deepseek,kojima2022large}. 

However, recent work shows that the benefits of additional reasoning tokens do not scale monotonically. On fixed-difficulty problems, accuracy typically follows an inverted-U relationship with CoT length (see Figure~\ref{fig:teaser}): performance first improves as the model reasons longer, reaches a peak, and then declines as the trajectory becomes excessively long~\citep{wu2025more,chen2026think,gema2025inverse}. Harder problems shift this peak rightward, requiring longer reasoning before degradation appears.

What drives this decline? Prior work has identified that not all
reasoning tokens contribute equally: a small subset of sentences disproportionately determines the final
answer, and certain attention heads consistently focus on them
\citep{bogdan2025thoughtanchorsllmreasoning}. We complement this line of
work by asking a different question: \emph{how does the model's access to such critical insights evolve as the reasoning chain grows?} 

In Section~\ref{sec:property2}, we directly measure the
pre-softmax attention from answer tokens to critical insights as
a function of token distance, and find that it decays monotonically.
As a result, later reasoning is conducted with progressively degraded access to the very intermediate conclusions that matter most.
This observation suggests a broader view of test-time scaling: increasing reasoning depth alone is insufficient unless important reasoning state can also remain accessible over time.

This echoes a classical challenge in sequence modeling: preserving long-range information across time. LSTM \citep{hochreiter1997long} addressed this by introducing an explicit memory cell that persists and is continually updated as the sequence unfolds. A natural question therefore arises for long reasoning chains: \textit{can we equip the growing reasoning chain with an analogous mechanism that keeps critical insights accessible over time?}

\begin{wrapfigure}{r}{0.5\linewidth}
  \vspace{-0.6em}
  \centering
  \includegraphics[width=\linewidth]{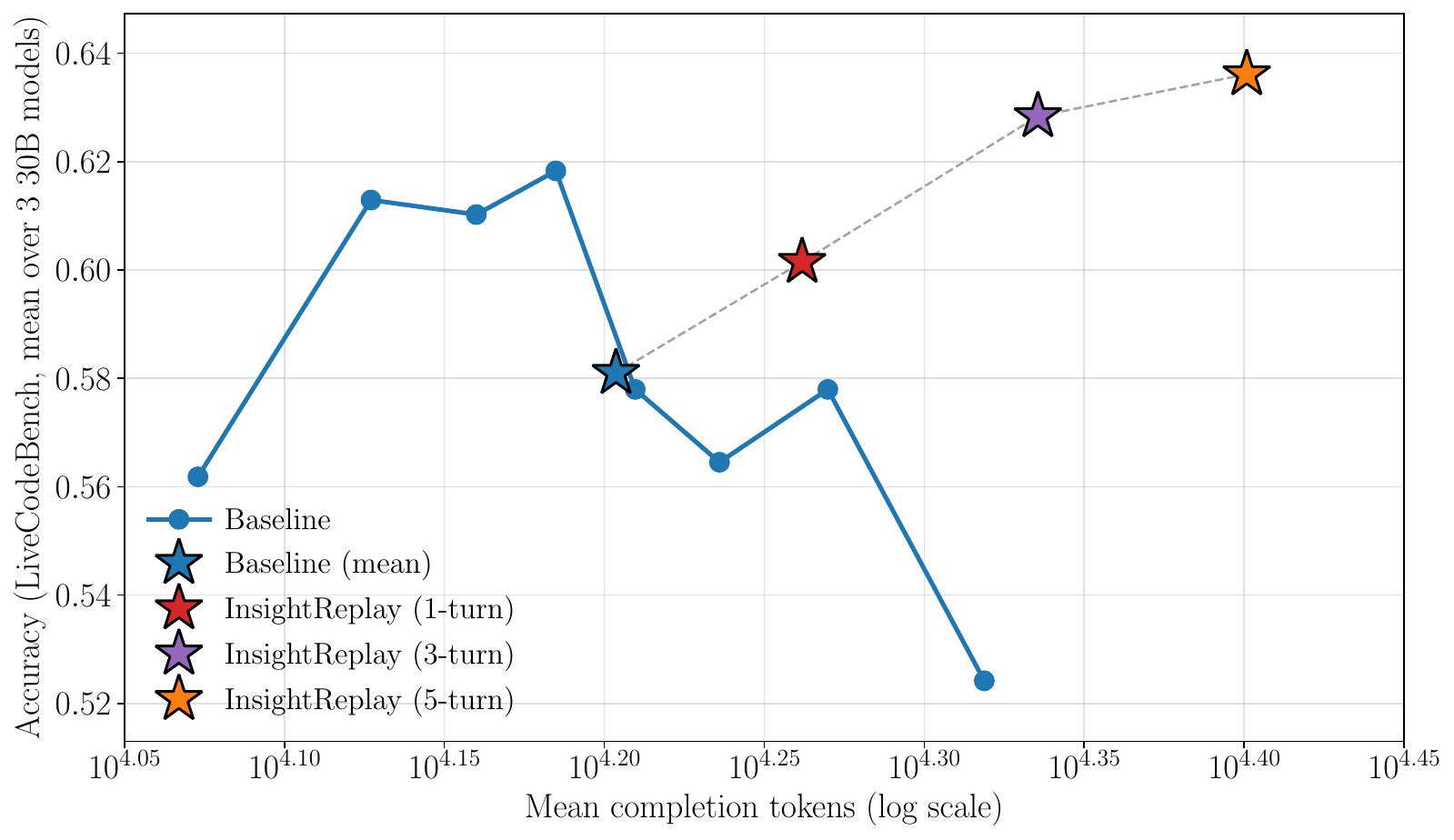}
  \caption{
  Accuracy vs.\ mean tokens for Baseline (standard 
  CoT) and InsightReplay ($1$, $3$, $5$ replay rounds), averaged over three 
  30B-tier models on the LiveCodeBench v5 subset. Each star reports 
  the overall mean accuracy.
  % See Section~\ref{sec:inverted-u-validation} for the full setup.
    % \eric{one concern for this figure: reviewers might question that under the same token budget (before the peak), InsightReplay performs worse than CoT.}
  }
  \label{fig:teaser}
  \vspace{-0.6em}
\end{wrapfigure}

% To this end, we introduce \emph{stateful reasoning} for CoT, 
% where the reasoning state at any point is the cumulative set of 
% \emph{insights} the model has generated so far, compressed 
% abstractions of prior reasoning. We instantiate this with 
% \textbf{InsightReplay}, which periodically generates new 
% insights and replays them near the active generation frontier. 
% By keeping these insights close to the decoding position where 
% attention to them remains strong, InsightReplay mitigates the 
% attention decay that critical intermediate content suffers in 
% long trajectories, raising the achievable peak accuracy and 
% expanding the regime in which extended reasoning translates 
% into higher accuracy\eric{this sentence is too long. break it into several sentences.}.

To answer this question, we introduce \emph{stateful reasoning} for CoT, where 
the reasoning state at any point is the cumulative set of 
\emph{insights} the model has generated so far, compressed 
abstractions of prior reasoning. We instantiate this with 
\textbf{InsightReplay}, which periodically generates new insights 
and replays them near the active generation frontier. Keeping 
insights close to the decoding position preserves the model's 
attention to them. This mitigates the attention decay that critical 
intermediate content suffers in long trajectories, raising the 
achievable peak accuracy and expanding the regime in which extended 
reasoning translates into higher accuracy.

Figure~\ref{fig:teaser} previews this property. The Baseline curve exhibits the characteristic inverted-U pattern: accuracy first rises with CoT length, peaks at around $15$K tokens, and then declines as the chain grows further.
InsightReplay behaves differently. It operates precisely in the long CoT regime where additional reasoning already degrades the Baseline, yet each additional replay round ($1$, $3$, $5$) achieves progressively higher mean accuracy. 
By $3$ replay rounds, InsightReplay's mean accuracy already exceeds the Baseline's peak; by $5$ rounds, performance continues to improve further. In effect, InsightReplay turns the Baseline's degradation regime into a continued-growth regime, 
extending the range of token budgets over which additional reasoning remains productive.

% The main contributions of this paper are as follows:
% \begin{itemize}
%     \item We empirically study critical insights in CoT and identify two 
%     key properties: \textbf{(I)} critical insights and the surrounding 
%     reasoning trace 
%     are complementary; \textbf{(II)} attention to critical insights 
%     decays as CoT extends, making them progressively less accessible 
%     to the model.
%     % \item Building on these two properties, we develop a theoretical model\eric{this is a big claim. Most deep learning methods are experiment-driven and not supported by theory. I would avoid the overclaiming risk.} showing that InsightReplay shifts the optimal reasoning length to the right and lifts peak accuracy.
%     \item Building on these two properties, we propose 
%     \textbf{InsightReplay}, a stateful reasoning approach that 
%     periodically extracts critical insights and replays them near the 
%     active generation frontier. We provide a theoretical analysis 
%     showing that InsightReplay can raise the peak accuracy.
%     \item We validate InsightReplay in a purely inference-time setting and show that replaying critical insights yields substantial gains without any additional training.
%     \item We further reinforce this reasoning pattern via post-training, demonstrating the advantages of InsightReplay over vanilla CoT under the same training settings.
% \end{itemize}

The main contributions of this paper are as follows:

\begin{itemize}

    \item We empirically study critical insights in CoT and identify two key properties: \textbf{(I)} critical insights and the surrounding reasoning trace are complementary; \textbf{(II)} attention to critical insights decays as CoT grows, making them progressively less accessible to the model.

    \item Building on these observations, we propose \textbf{InsightReplay}, a stateful reasoning approach that periodically extracts critical insights and replays them near the active generation frontier. We further provide a theoretical analysis showing that InsightReplay can shift the optimal reasoning length rightward and raise the achievable peak accuracy.

    \item We validate InsightReplay in a purely inference-time setting and show that replaying critical insights yields consistent improvements across a broad range of models and reasoning benchmarks, without any additional training.

    \item We further reinforce this reasoning pattern through post-training, demonstrating that InsightReplay improves both training stability and reasoning performance over vanilla CoT under same training settings.

\end{itemize}

\section{Properties of Critical Insights in Long CoT}

\subsection{Critical Insights and the Reasoning Trace are Complementary}
\label{sec:property1}

\paragraph{Related Work.}
Recent work provides evidence that information in CoT is not uniformly distributed: \emph{compression methods} show that many reasoning tokens can be pruned with minimal accuracy loss \citep{xia2025tokenskip,kang2025c3ot,yuan2025not,li2026making}; \emph{summarize-and-restart methods} first generate a reasoning segment, then summarizes it into a compact form, discards the original segment, and resumes thinking from the summary \citep{yan2025inftythink,yang2025pencil,madaan2025rethinking}; and \emph{importance-attribution methods} identify a small number of sentences or tokens that disproportionately influence the final answer \citep{bogdan2025thoughtanchorsllmreasoning,lin2024critical}. What remains underexplored is the relationship between the original insights produced within the CoT and the surrounding reasoning trace that produced them. We seek to verify and quantify whether the two are substitutable or complementary.

\paragraph{Experimental Setting.}
Using \texttt{Qwen3-8B} as the base model, we generate thinking traces for $60$ AIME competition problems~\citep{maa_aime}, with a maximum generation length of $30{,}000$ tokens. We then use the same model to extract $5$--$7$ key insights from each trace.
We construct seven conditions with identical chat templates and answer formats, differing only in the content inside the thinking tag:
\textbf{(A)} original full thinking;
\textbf{(B)} repeated question + insights;
\textbf{(C)} insights only;
\textbf{(D)} full thinking + repeated question + insights;
\textbf{(E)} full thinking + insights;
\textbf{(F)} random tokens + repeated question + insights;
\textbf{(G)} random tokens + insights.
For each condition, we compute the answer probability $P(\mathrm{ans}) := P_{\theta}(a_{\mathrm{gt}} \mid x) = \prod_{t} P_{\theta}(a_{\mathrm{gt}}^{(t)} \mid x, a_{\mathrm{gt}}^{(<t)})$ via a single forward pass on \texttt{Qwen3-8B}, where $x$ is the prompt prefix up to and including \texttt{\textbackslash boxed\{} and $a_{\mathrm{gt}}$ are the ground-truth answer tokens; full details are in Appendix~\ref{app:property1}.

\begin{table}[ht]
\centering
\small
\caption{Effect of thinking content on answer probability $P(\mathrm{ans})$ on $60$ AIME problems with \texttt{Qwen3-8B}. $\checkmark$ = present, -- = absent, $\varnothing$ = random tokens length-matched to CoT reasoning trace.}
\label{tab:cot-critical-insights}
\begin{tabular}{c ccc ccc}
\toprule
\multirow{2}{*}{\textbf{Conditions}} & \multicolumn{3}{c}{\textbf{Content inside \texttt{<think>}}} & \multirow{2}{*}{\textbf{Tokens}} & \multirow{2}{*}{\textbf{$P(\mathrm{ans})$}} & \multirow{2}{*}{\textbf{$P/\text{Tok}$ (\textperthousand)}} \\
\cmidrule(lr){2-4}
 & CoT reasoning trace & Repeated Q & Insights & & & \\
\midrule
\rowcolor{gray!15}
A & \checkmark & --         & --         & $16{,}731$ & $0.512$            & $0.031$ \\
\midrule
B & --         & \checkmark & \checkmark & $378$      & $0.273$            & $0.722$ \\
C & --         & --         & \checkmark & $236$      & $0.387$            & $\mathbf{1.640}$ \\
D & \checkmark & \checkmark & \checkmark & $17{,}109$ & $\mathbf{0.557}$   & $0.033$ \\
E & \checkmark & --         & \checkmark & $16{,}967$ & $0.545$            & $0.032$ \\
F & $\varnothing$ & \checkmark & \checkmark & $17{,}109$ & $0.131$            & $0.008$ \\
G & $\varnothing$ & --         & \checkmark & $16{,}967$ & $0.127$            & $0.007$ \\
\bottomrule
\end{tabular}
\end{table}

\paragraph{Results \& Conclusion.}
Table~\ref{tab:cot-critical-insights} reveals four observations:
\textbf{(I) Insights carry concentrated signal.} Condition C 
retains $75.6\%$ of the answer probability with only $1.4\%$ 
of the tokens, indicating that the core reasoning content 
distills into a small set of critical insights.
\textbf{(II) Insights provide additive value on top of the reasoning trace.} Replaying insights on top of the original reasoning trace further improves accuracy from $0.512$ to $0.545$--$0.557$, demonstrating that even when the reasoning trace is already present, re-exposing the model to its own insights yields measurable gains.
\textbf{(III) Bad reasoning trace actively harms inference.} As an 
extreme case, F and G replace the entire CoT reasoning trace with length-matched random tokens, dropping accuracy to $0.127$--$0.131$, which is below the insight-only conditions B 
($0.273$) and C ($0.387$).
\textbf{(IV) Repeating the question helps only when paired with 
a reasoning trace.} The repeated question helps with a real 
reasoning trace (D vs.\ E), but hurts substantially 
without any reasoning trace (B vs.\ C).

\subsection{Critical Insights Become Less Accessible as Reasoning Chains Grow}
\label{sec:property2}

\paragraph{Related Work.}
While prior work has 
identified important reasoning steps and the heads that attend 
to them \citep{bogdan2025thoughtanchorsllmreasoning}, and while 
positional encodings such as RoPE 
\citep{su2023roformerenhancedtransformerrotary} are known to 
bias attention toward nearby tokens, empirical attention 
patterns are ultimately shaped by learned query-key interactions 
rather than positional geometry alone 
\citep{barbero2025roundroundgomakes}. Whether attention to 
critical insights actually decays with distance therefore 
requires direct measurement. We take critical insights as the 
unit of analysis and measure how attention from answer tokens 
to these insights varies with token distance.

\paragraph{Experimental Setting.}
On $60$ AIME problems with \texttt{Qwen3-8B} and 
\texttt{Gemma-4-31B-it}, the model first generates a CoT 
trace, after which \texttt{Qwen3.5-397B-A17B} identifies $5$--$7$ 
critical insights, each localized to a token span $c$ ending at 
position $p_e$. We use a stronger extractor here because attention measurement requires 
precise token boundaries that smaller models do not produce 
reliably.
To isolate the effect of distance, we insert semantically neutral 
filler tokens at sentence boundaries between the insight 
and the answer. Each filler unit is \texttt{"\textbackslash n~"} 
(newline + space), chosen for its minimal semantic impact. Let 
$p_a$ denote the answer position, so the baseline distance is 
$d = p_a - p_e$. We sweep an insertion ratio 
$\rho \in \{0, 0.1, 0.2, 0.3, 0.4\}$, inserting $N = \lfloor \rho 
d \rfloor$ filler units total, distributed uniformly at random 
across the sentence boundaries between insight and answer. The 
resulting total distance is $(1+\rho) d$ regardless of placement. 
We use one sample at $\rho = 0$ and $10$ random allocations at 
each non-zero $\rho$, yielding $41$ variants per insight. Since the insight and all preceding tokens are untouched, any 
change in attention is primarily attributable to distance.
For each variant, we compute the per-layer
\textbf{pre-softmax} attention score from the answer token to the insight span:
\[
  s^{(\ell)}(c, a) \;=\; \frac{1}{|H|}\sum_{h}\;\frac{1}{|c|}\sum_{t \in c}\; \frac{\mathbf{q}_{p_a}^{(\ell, h)} \cdot \mathbf{k}_{t}^{(\ell, h)}}{\sqrt{d_\mathrm{head}}},
\] 
where $c$ denotes the set of token positions covered by the insight span (with $|c|$ its size and $t$ ranging over its elements), $\ell$ is the layer index, $h$ indexes attention heads, $\mathbf{q}$ and $\mathbf{k}$ are post-RoPE query and key vectors, $|H|$ is the number of heads, and $d_\mathrm{head}$ is the per-head dimension. We use pre-softmax scores rather than post-softmax weights because softmax normalization over the full preceding context mechanically squashes weights as context length grows, confounding the effect of distance with that of sequence length. Full implementation details are in Appendix~\ref{app:property2}.

\begin{figure}[ht]
\centering
\includegraphics[width=\textwidth]{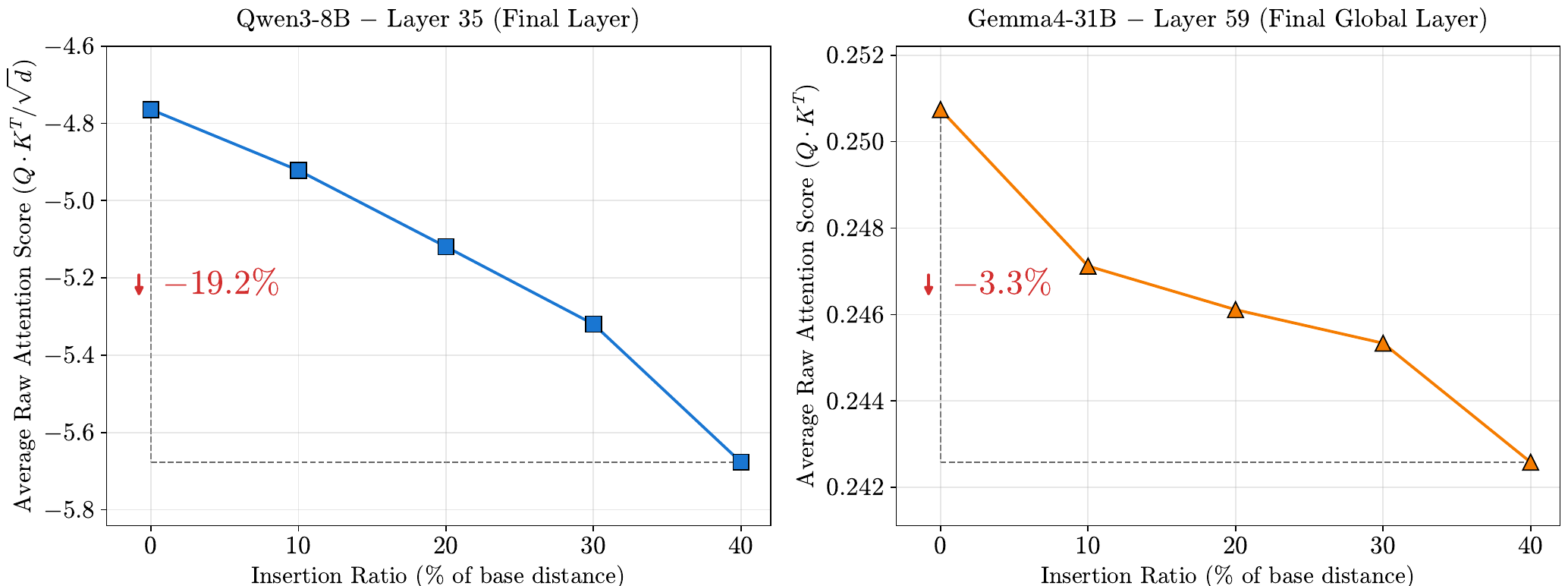}
\caption{Average pre-softmax attention from the answer token to 
insights vs.\ insertion ratio $\rho$ at each model's final 
attention layer, under native scaling 
($\mathbf{q}\cdot\mathbf{k}/\sqrt{d_\mathrm{head}}$ for 
\texttt{Qwen3-8B}; $\mathbf{q}\cdot\mathbf{k}$ for \texttt{Gemma-4-31B-it}). Annotations show relative decline from $\rho{=}0-0.4$ (paired bootstrap, $p<0.001$).}
\label{fig:final-layer-comparison}
\end{figure}

\paragraph{Results \& Conclusion}
Figure~\ref{fig:final-layer-comparison} reports the relative 
decline in attention to critical insights from $\rho{=}0$ to 
$\rho{=}0.4$: $19.2\%$ for \texttt{Qwen3-8B} (final layer) and $3.3\%$ 
for \texttt{Gemma-4-31B-it} (final global-attention layer). 
We draw two conclusions:
\textbf{(I)} Within each model, attention to critical insights 
decays monotonically as the insight is pushed away from the 
generation frontier, and the decline is highly significant 
($p<0.001$).
\textbf{(II)} The decay is not specific to a single architecture 
or RoPE configuration: it persists in Gemma-4-31B-it, which uses 
partial-rotary RoPE on only the first quarter of head dimensions 
and a different attention scaling, suggesting the phenomenon 
reflects a general property of trained attention rather than a 
feature of any specific positional encoding.

\section{InsightReplay}
\subsection{Method Overview}
Figure~\ref{fig:insightreplay-method} illustrates the core idea 
of InsightReplay. Reasoning unfolds as an interleaving of two 
types of content: \emph{reasoning chunks} ($R_t$), in which the 
model performs step-by-step thinking, and \emph{insights} 
($I_t$), short distilled conclusions the model itself produces 
to summarize prior reasoning. As the trajectory progresses, the 
sequence of insights $\{I_t\}$ acts as a continuously updated, 
compressed abstraction of what has been concluded so far, kept 
near the active generation frontier so it remains accessible to 
subsequent reasoning. One specific example is shown in Appendix~\ref{app:case-analysis}.

\begin{figure}[ht]
\centering
\includegraphics[width=\textwidth]{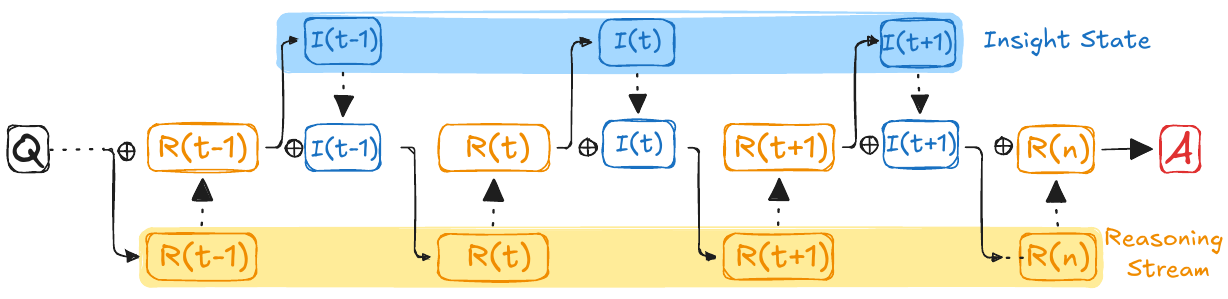}
\caption{Illustration of InsightReplay. \includegraphics[height=0.8em]{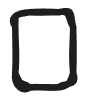}: input question, \includegraphics[height=0.8em]{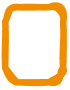}: reasoning chunk, \includegraphics[height=0.8em]{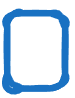}: insight, \includegraphics[height=0.8em]{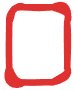}: final answer, \includegraphics[height=0.8em]{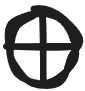}: appending a newly generated chunk to the conversation history, \includegraphics[height=0.8em]{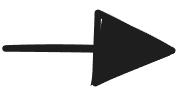}: forward inference, and \includegraphics[height=0.8em]{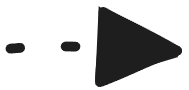}: grouping chunks into a logical state. The dashed arrows group chunks into two logical roles: insights collectively form the \textcolor{cyan!50!blue}{\textbf{Insight State}}, and reasoning chunks form the \textcolor{orange!85!black}{\textbf{Reasoning Stream}}. These groupings reflect logical roles only, and all chunks reside in the same conversation history.}
\label{fig:insightreplay-method}
\end{figure}
\paragraph{Method Description.}
The process begins from the user question $Q$. We initialize 
$I_0 \!=\! Q$, treating the question itself as a ``goal 
insight'' that anchors all subsequent reasoning, consistent 
with Section~\ref{sec:property1}. The model then enters a 
periodic loop. At each round $t$, conditioned on the full 
history $(Q, R_1, I_1, \ldots, R_{t-1}, I_{t-1})$, the model:
(i) generates a reasoning chunk $R_t$, and
(ii) generates a new insight $I_t$ via a summarization prompt that asks the model to consolidate the conclusions reached so far, with the thinking content $R_1, \ldots, R_t$ retained in context.
Both $R_t$ and $I_t$ are appended to the history in order. 
Crucially, because $I_t$ is generated in the presence of all 
prior insights $I_0, \ldots, I_{t-1}$, each new insight can 
reflect on and supersede earlier ones---incorporating 
new conclusions, correcting prior errors, or refining 
imprecise statements. The cumulative trace of insights thus 
forms an evolving abstraction of the reasoning so far, rather 
than a flat concatenation of independent summaries. After $N$ 
rounds (a hyperparameter; we ablate $N \!\in\! \{1, 3\}$ in 
Section~\ref{sec:experiments}), the model produces the final 
answer $A$ from the complete history. By construction, the 
most recent $I_{t-1}$ always sits immediately before $R_t$, 
keeping the latest distilled conclusions close to the decoding 
position of subsequent reasoning.

\subsection{Theoretical Analysis}
\label{sec:theory}
In this section, we formalize the observations from Sections~\ref{sec:property1} and~\ref{sec:property2}, and prove that InsightReplay provably shifts the optimal CoT length rightward and raises the peak accuracy.  

\subsubsection{Modeling}
%==============================================================================

Following \citet{wu2025more}, for a task $q$ with difficulty $T$ and model $\theta$, the probability of an $N$-step CoT producing the correct final answer is:
\begin{equation}
    P(a_{\mathrm{final}} | q, \theta, N)
    = \prod_{i=1}^{N}
      \underbrace{P(q_i | H_{i-1}, q, \theta, N)}_{\text{sub-question}}
      \;\underbrace{P(a_i | q_i, H_{i-1}, q, \theta, N)}_{\text{sub-answer}},
    \label{eq:cot_def}
\end{equation}
where $H_k := [t_1, a_1, \dots, t_k, a_k]$ denotes the reasoning history up to step~$k$.

They parameterize the sub-question accuracy as $(1 - \sigma(T))$, where $\sigma(T)$ denotes the probability that the model fails to generate the correct next sub-question for a task of total difficulty $T$, and the sub-answer accuracy as $(1 - E(N, M, T))$ with a linear error model $E(N,M,T) = T/(NM)$, where $M$ represents the model's reasoning capability. Thus, the accuracy of $N$-step reasoning is:
\begin{equation}
    A_0(N) = \alpha\,\bigl[(1-\sigma(T))(1-T/(NM))\bigr]^N,
    \label{eq:A0}
\end{equation}
where $\alpha$ is a constant independent of $N$.
Crucially, their parameterization \emph{drops the dependence on the history} $H_{i-1}$, implicitly assuming each step's accuracy is independent of its position in the chain.

However, our Section~\ref{sec:property2} shows: as $i$ increases, the model's ability to access critical insights from earlier steps diminishes due to attention decay.
We restore this missing dependence by introducing an \emph{insight accessibility function} $\Phi(i)$ into the sub-question term.
Specifically, we model the sub-question accuracy at step $i$ as $(1 - \sigma(T)) \cdot \Phi(i)$, where $\Phi(i)$ captures how effectively the model can access prior insights when formulating the $i$-th sub-question. We apply $\Phi(i)$ to the sub-question term because generating the correct next sub-question requires synthesizing prior insights, whereas answering a sub-question, once correctly posed, depends primarily on the sub-question itself.

\begin{assumption}
\label{assump:decay}
The insight accessibility function $\Phi: [0, \infty) \to (0, 1]$ satisfies:
\begin{enumerate}
    \item $\Phi(0) = 1$ \hfill \textup{(zero distance $\Rightarrow$ full accessibility);}
    \item $\Phi$ is strictly monotonically decreasing (Section~\ref{sec:property2}) \hfill \textup{(farther $\Rightarrow$ less accessible);}
    \item $\displaystyle\lim_{i \to \infty} \Phi(i) = 0$ \hfill \textup{(infinite distance $\Rightarrow$ inaccessible).}
\end{enumerate}
\end{assumption}

Under this modeling, we define two accuracy functions:

% \paragraph{Real-world CoT (with insight decay).}
% \begin{equation}
%     A_\Phi(N)
%     = \prod_{i=1}^{N} (1-\sigma)(1-E)\,\Phi(i)
%     = [p(N)]^N \cdot \prod_{i=1}^{N} \Phi(i),
%     \label{eq:Aphi}
% \end{equation}
% where $p(N) := (1 - \sigma(T))(1 - E(N, M, T))$ denotes the per-step accuracy without decay.

% \paragraph{InsightReplay.}
% InsightReplay relocates critical insights to a fixed distance $d_0 \geq 0$ from the current generation point at every step, so each step's accessibility becomes $\Phi(d_0)$ instead of $\Phi(i)$:
% \begin{equation}
%     A_{\mathrm{IR}}(N)
%     = [p(N)]^N \cdot [\Phi(d_0)]^N
%     = [p(N) \cdot \Phi(d_0)]^N.
%     \label{eq:Air}
% \end{equation}

\paragraph{Real-world CoT vs.\ InsightReplay.}
We define two accuracy functions, where 
$p(N) := (1 - \sigma(T))(1 - E(N, M, T))$ denotes the per-step 
accuracy without decay. The real-world CoT accuracy $A_\Phi$ accounts 
for insight decay via $\Phi(i)$, while InsightReplay relocates 
critical insights to a fixed distance $d_0\geq 0$ from the generation 
frontier at every step, replacing $\Phi(i)$ with the constant 
$\Phi(d_0)$:
\begin{equation}
    A_\Phi(N) = [p(N)]^N \prod_{i=1}^{N} \Phi(i),
    \qquad
    A_{\mathrm{IR}}(N) = [p(N) \cdot \Phi(d_0)]^N.
    \label{eq:A_both}
\end{equation}

\subsubsection{Main Results}

We first state two structural lemmas that underlie our main results. 
Let $g(N) := \frac{d}{dN}[N\ln p(N)]$.

\begin{lemma}[Strict concavity]
\label{lem:concave}
$\ln A_0(N)$ is strictly concave for $N>T/M$, and consequently $g$ is 
strictly decreasing on this domain.
\end{lemma}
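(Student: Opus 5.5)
The plan is a direct computation: write $\ln A_0(N)$ explicitly, separate off the part that is linear in $N$, and compute the second derivative of the remaining part. Put $c := T/M > 0$, and treat $N$ as a continuous variable on $(c,\infty)$. On this domain $1 - c/N \in (0,1)$, so the logarithm is well defined. From \eqref{eq:A0},
\[
\ln A_0(N) = \ln\alpha + N\ln\bigl(1-\sigma(T)\bigr) + N\ln\Bigl(1-\frac{c}{N}\Bigr).
\]
The first two terms are affine in $N$, since $\sigma(T)$ and $\alpha$ do not depend on $N$. Strict concavity therefore reduces to strict concavity of $f(N) := N\ln(1-c/N) = N\ln(N-c) - N\ln N$.

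Next I differentiate twice:
\[
f'(N) = \ln(N-c) - \ln N + \frac{N}{N-c} - 1,
\qquad
f''(N) = \frac{1}{N-c} - \frac{c}{(N-c)^2} - \frac{1}{N}.
\]
Putting $f''(N)$ over the common denominator $N(N-c)^2$, the numerator is $N(N-c) - cN - (N-c)^2 = -c^2$. Hence
\[
f''(N) = -\frac{c^2}{N(N-c)^2} < 0 \quad \text{for all } N > c,
\]
and so $\ln A_0$ is strictly concave on $N > T/M$.

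For the consequence about $g$, note that $N\ln p(N) = N\ln(1-\sigma(T)) + f(N)$. This gives $g(N) = \ln(1-\sigma(T)) + f'(N)$ and $g'(N) = f''(N) < 0$, so $g$ is strictly decreasing on $(T/M,\infty)$.

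The only delicate step is the algebraic simplification of the numerator to $-c^2$. This is where a sign error could creep in, so I would double-check it by expanding $(N-c)^2$ term by term. I would also state explicitly that $T, M > 0$, which makes $c > 0$ and the inequality strict, and that $1-\sigma(T) > 0$, so the affine term is finite.
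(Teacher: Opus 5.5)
Your proposal is correct and follows the paper's proof essentially step for step. You set $c = T/M$, split off the affine part $\ln\alpha + N\ln(1-\sigma(T))$, compute $h''(N) = -c^2/\bigl(N(N-c)^2\bigr) < 0$ for $h(N) = N\ln(1-c/N)$, and conclude $g'(N) = (\ln A_0)''(N) < 0$ on $N > T/M$.
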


\begin{lemma}[Uniqueness of optima]
\label{lem:unique}
$A_\Phi$ and $A_{\mathrm{IR}}$ each admit a unique maximizer on 
$(0,\infty)$, characterized by 
\begin{equation}
    g(N_\Phi^*) = -\ln\Phi(N_\Phi^*), \qquad 
    g(N_{\mathrm{IR}}^*) = -\ln\Phi(d_0).
\end{equation}
\end{lemma}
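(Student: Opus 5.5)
We work with logarithms, since $A_\Phi$ and $A_{\mathrm{IR}}$ are positive wherever $p(N)>0$, i.e.\ for $N>T/M$. Maximizing $A_\Phi$ or $A_{\mathrm{IR}}$ is then the same as maximizing
\begin{equation*}
L_\Phi(N) := N\ln p(N) + \sum_{i=1}^{N}\ln\Phi(i),\qquad L_{\mathrm{IR}}(N) := N\ln p(N) + N\ln\Phi(d_0).
\end{equation*}
On $(0,T/M]$ we have $p(N)\le 0$. There the model gives accuracy zero (or is undefined), so no maximizer lies there. Hence we restrict to $N>T/M$.

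To treat $N$ as continuous, as the statement requires, we interpret $\sum_{i=1}^N\ln\Phi(i)$ as $\int_0^N\ln\Phi(s)\,ds$. This has derivative $\ln\Phi(N)$. Assumption~\ref{assump:decay} makes $\Phi$ monotone, so it is integrable, and we assume enough regularity (e.g.\ continuity) for the fundamental theorem of calculus to apply. This modeling step is the point that most needs to be made explicit. The derivatives are then
\begin{equation*}
L_\Phi'(N)=g(N)+\ln\Phi(N),\qquad L_{\mathrm{IR}}'(N)=g(N)+\ln\Phi(d_0).
\end{equation*}
Any interior maximizer must therefore satisfy exactly the two displayed equations.

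\textbf{Uniqueness.} By Lemma~\ref{lem:concave}, $g$ is strictly decreasing on $(T/M,\infty)$. By Assumption~\ref{assump:decay}(2), $\ln\Phi$ is strictly decreasing, so $\ln\Phi(N)$ is decreasing. Hence $L_\Phi'$ is strictly decreasing, and $L_{\mathrm{IR}}'$ is $g$ shifted by a constant, also strictly decreasing. Both $L_\Phi$ and $L_{\mathrm{IR}}$ are therefore strictly concave, so each has at most one critical point, and any critical point is the unique global maximizer.

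\textbf{Existence.} It suffices to show a sign change of the derivative. We compute
\begin{equation*}
g(N)=\ln(1-\sigma)+\ln\!\left(1-\frac{T}{NM}\right)+\frac{T/(NM)}{1-T/(NM)}.
\end{equation*}
As $N\downarrow T/M$, the last term tends to $+\infty$ faster than the logarithm tends to $-\infty$. Writing $u=T/(NM)\to 1^-$, the expression behaves like $\ln(1-u)+u/(1-u)\to+\infty$. Since $\ln\Phi$ is bounded near $T/M$, both derivatives are positive there.

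As $N\to\infty$, $g(N)\to\ln(1-\sigma)\le 0$.
\begin{itemize}
\item For $L_\Phi$: $\ln\Phi(N)\to-\infty$ by Assumption~\ref{assump:decay}(3), so $L_\Phi'\to-\infty$.
\item For $L_{\mathrm{IR}}$: the limit is $\ln(1-\sigma)+\ln\Phi(d_0)$. This is strictly negative if $\sigma>0$ or $d_0>0$, using $\Phi(d_0)<1$ by strict monotonicity.
\end{itemize}
The intermediate value theorem then gives a root in each case.

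\textbf{Main obstacle.} The hardest part is the degenerate case $\sigma=0$, $d_0=0$. There $L_{\mathrm{IR}}'$ decreases to $0$ without crossing it, and $A_{\mathrm{IR}}$ is increasing with only a supremum. I would handle this by assuming $\sigma(T)>0$, which holds for any nontrivial task, or $d_0>0$, and state this explicitly. I would also check the domain convention $(0,\infty)$ versus $(T/M,\infty)$ in the statement.
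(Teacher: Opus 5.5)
Your proposal is correct and follows essentially the same route as the paper. Both arguments pass to logarithms, replace $\sum_{i\le N}\ln\Phi(i)$ by an integral with derivative $\ln\Phi(N)$, show $g(N)\to+\infty$ as $N\downarrow T/M$ and a negative limit of the derivative as $N\to\infty$, apply the intermediate value theorem, and get uniqueness from the strict monotonicity of $g$ and $\ln\Phi$. Your explicit handling of the $A_{\mathrm{IR}}$ endpoint is a genuine improvement over the paper's ``analogous'' step, which tacitly assumes $\sigma>0$ by writing $\ln(1-\sigma)<0$; you correctly identify $\sigma>0$ or $d_0>0$ as exactly what is needed, and indeed $g(N)>\ln(1-\sigma)$ for all $N>T/M$, so no maximizer exists when $\sigma=d_0=0$.
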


Building on these lemmas, our two main results show that 
InsightReplay both shifts the optimal CoT length and raises the 
achievable peak accuracy.

\begin{theorem}[InsightReplay shifts the optimal length rightward]
\label{thm:ir_shift}
$N_{\mathrm{IR}}^* > N_\Phi^*$.
\end{theorem}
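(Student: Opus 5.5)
The plan is to compare the two first-order conditions of Lemma~\ref{lem:unique} and use the monotonicity of $g$ from Lemma~\ref{lem:concave}. Concretely, Lemma~\ref{lem:unique} gives
\[
g(N_\Phi^*) = -\ln\Phi(N_\Phi^*), \qquad g(N_{\mathrm{IR}}^*) = -\ln\Phi(d_0).
\]
If I can show that the right-hand side of the second equation is strictly smaller than that of the first, then $g(N_{\mathrm{IR}}^*) < g(N_\Phi^*)$. Since $g$ is strictly decreasing, this forces $N_{\mathrm{IR}}^* > N_\Phi^*$.

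The first step is to check that both maximizers lie in the domain $N > T/M$ where Lemma~\ref{lem:concave} applies. For $N \le T/M$ the sub-answer factor satisfies $1 - T/(NM) \le 0$. Hence $p(N)\le 0$, and both $A_\Phi$ and $A_{\mathrm{IR}}$ are zero there (or degenerate). On $N > T/M$, by contrast, both are strictly positive, because $\Phi>0$ and $\sigma(T)<1$. So any maximizer must satisfy $N > T/M$, and on this interval $g$ is strictly decreasing. The first-order conditions then come from differentiating
\[
\ln A_\Phi(N) = N\ln p(N) + \sum_{i\le N}\ln\Phi(i)
\]
(treated as a continuous extension with derivative $\ln\Phi(N)$) and
\[
\ln A_{\mathrm{IR}}(N) = N\ln p(N) + N\ln\Phi(d_0).
\]
Both are exactly as stated in Lemma~\ref{lem:unique}, which I take as given.

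The second step is the comparison $-\ln\Phi(d_0) < -\ln\Phi(N_\Phi^*)$, which is equivalent to $\Phi(d_0) > \Phi(N_\Phi^*)$. By the strict monotone decrease in Assumption~\ref{assump:decay}, this is equivalent to $d_0 < N_\Phi^*$. This is the step I expect to be the main obstacle, because the statement as written does not explicitly constrain $d_0$. I would handle it by making explicit the modeling premise behind InsightReplay: replayed insights sit near the generation frontier, so the replay distance $d_0$ is smaller than the distance the insights would otherwise have drifted by the optimum. In the canonical case $d_0 = 0$ this is immediate. Then $\Phi(d_0)=1$, the IR condition reads $g(N_{\mathrm{IR}}^*)=0$, and the CoT condition has $-\ln\Phi(N_\Phi^*)>0$ since $N_\Phi^*>0$ and $\Phi$ is strictly decreasing from $\Phi(0)=1$. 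For general $d_0$, I would state the hypothesis $d_0 < N_\Phi^*$ explicitly. This hypothesis is natural because $N_\Phi^* > T/M$, while $d_0$ is a fixed small constant.

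With this comparison in hand, the conclusion follows in one line. We have $g(N_{\mathrm{IR}}^*) = -\ln\Phi(d_0) < -\ln\Phi(N_\Phi^*) = g(N_\Phi^*)$. Since $g$ is strictly decreasing on $(T/M,\infty)$, which contains both points, we get $N_{\mathrm{IR}}^* > N_\Phi^*$.
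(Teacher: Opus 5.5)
Your proposal is correct and follows essentially the same route as the paper: compare the optimality conditions $g(N_\Phi^*) = -\ln\Phi(N_\Phi^*)$ and $g(N_{\mathrm{IR}}^*) = -\ln\Phi(d_0)$, use strict decrease of $\Phi$ to get $\Phi(d_0) > \Phi(N_\Phi^*)$, and then apply the strict decrease of $g$ on $N > T/M$. The only difference is one of presentation. The paper simply asserts $d_0 \ll N_\Phi^*$ ``by design,'' whereas you state the needed hypothesis $d_0 < N_\Phi^*$ explicitly and also check that both maximizers lie in the domain where Lemma~\ref{lem:concave} applies.
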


\begin{theorem}[InsightReplay raises peak accuracy]
\label{thm:ir_peak}
$A_{\mathrm{IR}}(N_{\mathrm{IR}}^*) > A_\Phi(N_\Phi^*)$.
\end{theorem}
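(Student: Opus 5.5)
The plan is a short comparison argument that never uses the characterization in Lemma~\ref{lem:unique} beyond existence of the maximizers. The chain of inequalities I aim for is
\[
A_{\mathrm{IR}}(N_{\mathrm{IR}}^*) \;\ge\; A_{\mathrm{IR}}(N_\Phi^*) \;=\; [p(N_\Phi^*)]^{N_\Phi^*}\,[\Phi(d_0)]^{N_\Phi^*} \;>\; [p(N_\Phi^*)]^{N_\Phi^*}\prod_{i=1}^{N_\Phi^*}\Phi(i) \;=\; A_\Phi(N_\Phi^*).
\]
The two outer equalities are just the definitions in \eqref{eq:A_both}.

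\textbf{Step 1 (first inequality).} By Lemma~\ref{lem:unique}, $N_{\mathrm{IR}}^*$ maximizes $A_{\mathrm{IR}}$ over $(0,\infty)$. Hence its value there is at least its value at the particular point $N_\Phi^*$. No use of Theorem~\ref{thm:ir_shift} is needed; the rightward shift is a separate fact.

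\textbf{Step 2 (positivity of the common factor).} I will check that $[p(N_\Phi^*)]^{N_\Phi^*}>0$, so that it can be cancelled in the strict inequality. Since $A_\Phi$ is strictly positive for $N>T/M$ (where $1-T/(NM)>0$, $1-\sigma(T)>0$ and $\Phi>0$), the maximal value $A_\Phi(N_\Phi^*)$ is positive. Therefore $N_\Phi^*>T/M$ and $p(N_\Phi^*)>0$.

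\textbf{Step 3 (the strict middle inequality).} It then suffices to show that $\Phi(d_0)^{N}>\prod_{i=1}^{N}\Phi(i)$ for $N=N_\Phi^*$. I would argue factor by factor: each factor $\Phi(i)$ has $i\ge 1$. If $d_0<1$, strict monotonicity in Assumption~\ref{assump:decay} gives $\Phi(i)<\Phi(d_0)$ for every such $i$. All factors lie in $(0,1]$, so the products compare strictly as soon as there is at least one factor.

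\textbf{Main obstacle.} The delicate point is exactly this middle step, and it has two parts. First, one must make precise that the replay distance $d_0$ is smaller than the distance $1$ of the nearest ordinary step, i.e.\ $d_0<1$. This is the modeling content of ``replaying near the active generation frontier''. Without it the claim can fail: for $N=1$ and $d_0\ge 1$ we would have $\Phi(d_0)\le\Phi(1)$. I would state $d_0<1$ explicitly as the standing hypothesis; more weakly, $\Phi(d_0)>\Phi(1)$ suffices.

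Second, the product $\prod_{i=1}^{N}\Phi(i)$ must be interpreted for the real-valued optimizer $N_\Phi^*$, since the lemmas treat $N$ as continuous. I would adopt the convention implicit in Lemma~\ref{lem:unique}, whose first-order condition $g(N)=-\ln\Phi(N)$ corresponds to $\frac{d}{dN}\ln\prod\Phi = \ln\Phi(N)$, i.e.\ $\ln\prod_{i=1}^N\Phi(i)$ is read as $\int_0^N\ln\Phi(s)\,ds$. Under this reading the same comparison goes through as an integral comparison. Since $\ln\Phi(s)\le 0 = \ln\Phi(0)$ on $[0,N]$, the relevant inequality is $\int_0^N \ln\Phi(s)\,ds<N\ln\Phi(d_0)$. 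This follows from strict monotonicity whenever $d_0$ is small relative to the average decay, in particular for $d_0=0$ where $\Phi(d_0)=1$. I would present the discrete version as the main proof and remark that the continuous version holds under the same ``near frontier'' hypothesis.
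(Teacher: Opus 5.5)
Your proposal is correct and follows the paper's proof essentially verbatim: the paper also assumes $d_0<1$, uses strict monotonicity of $\Phi$ to get $A_{\mathrm{IR}}(N)>A_\Phi(N)$ for all $N\ge 1$, evaluates this at $N_\Phi^*$, and concludes with $A_{\mathrm{IR}}(N_{\mathrm{IR}}^*)\ge A_{\mathrm{IR}}(N_\Phi^*)$. Your positivity check on $[p(N_\Phi^*)]^{N_\Phi^*}$ and your caveat about evaluating the discrete product at a possibly non-integer optimizer are points the paper leaves implicit; it uses the discrete product here even though Lemma~\ref{lem:unique} relied on an integral approximation.
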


\paragraph{Proof intuition.}
Theorem~\ref{thm:ir_shift} follows because InsightReplay places 
insights adjacent to the generation frontier ($d_0 \ll N_\Phi^*$), so 
$\Phi(d_0) > \Phi(N_\Phi^*)$. Combined with the optimality conditions 
in Eq.~\eqref{eq:opt_cond} and the strict monotonicity of $g$ 
(Lemma~\ref{lem:concave}), this forces $N_{\mathrm{IR}}^* > N_\Phi^*$. 
Theorem~\ref{thm:ir_peak} follows because $\Phi(d_0) > \Phi(i)$ for 
all $i\geq 1$ implies $A_{\mathrm{IR}}(N) > A_\Phi(N)$ pointwise, and 
maximizing both sides preserves the inequality.
Full proofs are deferred to Appendix~\ref{app:proofs}.

\vspace{-0.5em}
\section{Pure Inference Experiments}
\label{sec:experiments}
\vspace{-0.5em}
We evaluate InsightReplay across a $2 \times 3 \times 4$ experimental grid: 
two model scales, three model families, and four reasoning benchmarks. Full 
implementation details are provided in Appendix~\ref{app:experiment-details}.
\vspace{-0.5em}
\subsection{Experimental Setup}
\paragraph{Models.}
We evaluate $6$ open-weight reasoning models, organized into a $2 \times 3$ grid of scales and families:
\textbf{(1)} \texttt{Qwen3.5-9B} and \texttt{Qwen3.5-35B-A3B}~\citep{qwen35}, where the latter is a mixture-of-experts model with $3$B active parameters;
\textbf{(2)} \texttt{Gemma-4-E4B} and \texttt{Gemma-4-31B-it}~\citep{gemma4}, dense models that interleave local sliding-window attention ($1024$ tokens) with global full-context attention and apply partial-rotary RoPE on the first quarter of head dimensions in global layers;
\textbf{(3)} \texttt{R1-Distill-Qwen-7B} and \texttt{R1-Distill-Qwen-32B}~\citep{guo2025deepseek}, distilled from DeepSeek-R1 onto Qwen2.5 backbones.
\paragraph{Benchmarks.}
We evaluate on four reasoning 
benchmarks: \textbf{AIME 2025/2026}~\citep{maa_aime} (60 math 
problems with integer answers), \textbf{HMMT}~\citep{balunovic2025matharena} 
(60 Harvard--MIT competition math problems), \textbf{GPQA 
Diamond}~\citep{gpqa} (198 graduate-level science MCQs), and 
\textbf{LiveCodeBench v5}~\citep{livecodebench} (62 code 
problems uniformly sampled from the full benchmark; see 
Appendix~\ref{app:experiment-details} for the sampling procedure).
\paragraph{Variants.}
We compare four inference variants: 
\textbf{(1) Base}: standard thinking-mode CoT;
\textbf{(2) VO} (\emph{Verify-Only}): the model is prompted to 
continue thinking after its initial trace via a 
budget-forcing cue (\texttt{"Wait, let me double-check..."}), but performs no insight 
extraction or replay;
\textbf{(3) IR1} / \textbf{(4) IR3}: InsightReplay with 1 or 3 
rounds of extract-then-replay.
The contrast between IR$k$ and VO isolates the contribution of 
the InsightReplay mechanism beyond extended thinking alone.
\vspace{-0.5em}
\paragraph{Sampling and metrics.}
For each problem, we draw $16$ completions with temperature $\tau = 1$ and top-$p = 0.95$. The model generates until it produces an EOS token or reaches its native context length. We report accuracy as the primary metric, with average completion tokens as a cost measure.

% ──────────────────────────────────────────────────
%  Main Results Table
% ──────────────────────────────────────────────────
\vspace{-0.5em}
\begin{table*}[ht]
  \centering
  \caption{%
    Main inference results on four reasoning benchmarks.
    \textbf{Base}: standard thinking-mode CoT inference.
    \textbf{VO} (Verify-Only): a control for test-time scaling, in which the model is instructed to keep thinking after its initial trace but performs no insight extraction or replay.
    \textbf{IR1}/\textbf{IR3}: InsightReplay with 1 or 3 rounds of extract-then-replay.
    Accuracy gains over Base are highlighted in \textcolor{blue}{blue} when positive and \textcolor{red}{red} when negative; token columns report average completion tokens per sample.
  }
  \label{tab:main-results}
  \vspace{4pt}
  \scriptsize
  \setlength{\tabcolsep}{2.0pt}

  \resizebox{\textwidth}{!}{%
  \begin{tabular}{c c c c c c c c c c c c c c c c}
  \toprule
  \multirow{2}{*}{\textbf{Model}} & \multirow{2}{*}{\textbf{Data}} & \multicolumn{7}{c}{\textbf{Accuracy}} & \multicolumn{7}{c}{\textbf{Avg. completion tokens / sample}} \\
  \cmidrule(lr){3-9} \cmidrule(lr){10-16}
  & & \textbf{Base} & \textbf{VO} & \textbf{IR1} & \textbf{IR3} & \textbf{$\Delta$VO} & \textbf{$\Delta$IR1} & \textbf{$\Delta$IR3} & \textbf{Base} & \textbf{VO} & \textbf{IR1} & \textbf{IR3} & \textbf{VO/B} & \textbf{IR1/B} & \textbf{IR3/B} \\
  \midrule
  \multirow{4}{*}{$\mathrm{Qwen3.5\text{-}9B}$}
    & $\mathrm{AIME}$ & $88.30$ & $88.30$ & $88.70$ & $89.30$ & \blue{$+0.00$} & \blue{$+0.40$} & \blue{$\mathbf{+1.00}$} & $37102$ & $40835$ & $41952$ & $49404$ & $1.10\times$ & $1.13\times$ & $1.33\times$ \\
    & $\mathrm{GPQA}$ & $80.90$ & $81.20$ & $81.20$ & $81.10$ & \blue{$\mathbf{+0.30}$} & \blue{$\mathbf{+0.30}$} & \blue{$+0.20$} & $15587$ & $18104$ & $18696$ & $22584$ & $1.16\times$ & $1.20\times$ & $1.45\times$ \\
    & $\mathrm{LCB}$  & $52.50$ & $56.60$ & $58.20$ & $61.20$ & \blue{$+4.10$} & \blue{$+5.70$} & \blue{$\mathbf{+8.70}$} & $48097$ & $52275$ & $52526$ & $58582$ & $1.09\times$ & $1.09\times$ & $1.22\times$ \\
    & $\mathrm{HMMT}$ & $83.50$ & $83.90$ & $84.20$ & $84.50$ & \blue{$+0.40$} & \blue{$+0.70$} & \blue{$\mathbf{+1.00}$} & $45477$ & $49796$ & $51088$ & $58490$ & $1.09\times$ & $1.12\times$ & $1.29\times$ \\
  \midrule
  \multirow{4}{*}{$\mathrm{Gemma4\text{-}E4B}$}
    & $\mathrm{AIME}$ & $45.40$ & $46.00$ & $46.00$ & $46.10$ & \blue{$+0.60$} & \blue{$+0.60$} & \blue{$\mathbf{+0.70}$} & $5874$  & $7453$  & $7414$  & $10199$ & $1.27\times$ & $1.26\times$ & $1.74\times$ \\
    & $\mathrm{GPQA}$ & $56.40$ & $56.60$ & $56.70$ & $57.00$ & \blue{$+0.20$} & \blue{$+0.30$} & \blue{$\mathbf{+0.60}$} & $3342$  & $4286$  & $4266$  & $6023$  & $1.28\times$ & $1.28\times$ & $1.80\times$ \\
    & $\mathrm{LCB}$  & $56.50$ & $56.60$ & $57.30$ & $57.90$ & \blue{$+0.10$} & \blue{$+0.80$} & \blue{$\mathbf{+1.40}$} & $6705$  & $8191$  & $8052$  & $10357$ & $1.22\times$ & $1.20\times$ & $1.54\times$ \\
    & $\mathrm{HMMT}$ & $36.40$ & $37.70$ & $36.70$ & $36.60$ & \blue{$\mathbf{+1.30}$} & \blue{$+0.30$} & \blue{$+0.20$} & $7481$  & $9203$  & $9064$  & $11906$ & $1.23\times$ & $1.21\times$ & $1.59\times$ \\
  \midrule
  \multirow{4}{*}{$\mathrm{R1\text{-}Distill\text{-}7B}$}
    & $\mathrm{AIME}$ & $43.80$ & $44.00$ & $44.50$ & $44.20$ & \blue{$+0.20$} & \blue{$\mathbf{+0.70}$} & \blue{$+0.40$} & $12377$ & $13432$ & $14144$ & $17879$ & $1.09\times$ & $1.14\times$ & $1.44\times$ \\
    & $\mathrm{GPQA}$ & $50.00$ & $49.00$ & $50.20$ & $50.40$ & \red{$-1.00$}  & \blue{$+0.20$} & \blue{$\mathbf{+0.40}$} & $6998$  & $8059$  & $7868$  & $10246$ & $1.15\times$ & $1.12\times$ & $1.46\times$ \\
    & $\mathrm{LCB}$  & $7.30$  & $8.90$  & $8.50$  & $8.50$  & \blue{$\mathbf{+1.60}$} & \blue{$+1.20$} & \blue{$+1.20$} & $11631$ & $12856$ & $12438$ & $14217$ & $1.11\times$ & $1.07\times$ & $1.22\times$ \\
    & $\mathrm{HMMT}$ & $28.30$ & $29.10$ & $29.00$ & $29.00$ & \blue{$\mathbf{+0.80}$} & \blue{$+0.70$} & \blue{$+0.70$} & $14417$ & $15449$ & $15818$ & $19853$ & $1.07\times$ & $1.10\times$ & $1.38\times$ \\
  \midrule
  \multirow{4}{*}{$\mathrm{Qwen3.5\text{-}35B}$}
    & $\mathrm{AIME}$ & $91.50$ & $91.70$ & $91.80$ & $92.00$ & \blue{$+0.20$} & \blue{$+0.30$} & \blue{$\mathbf{+0.50}$} & $30110$ & $33584$ & $34470$ & $40966$ & $1.12\times$ & $1.14\times$ & $1.36\times$ \\
    & $\mathrm{GPQA}$ & $84.00$ & $84.10$ & $84.10$ & $84.40$ & \blue{$+0.10$} & \blue{$+0.10$} & \blue{$\mathbf{+0.40}$} & $19767$ & $23228$ & $23248$ & $28101$ & $1.18\times$ & $1.18\times$ & $1.42\times$ \\
    & $\mathrm{LCB}$  & $67.60$ & $68.50$ & $70.60$ & $71.70$ & \blue{$+0.90$} & \blue{$+3.00$} & \blue{$\mathbf{+4.10}$} & $31538$ & $35301$ & $35886$ & $41641$ & $1.12\times$ & $1.14\times$ & $1.32\times$ \\
    & $\mathrm{HMMT}$ & $88.50$ & $88.50$ & $88.40$ & $88.60$ & \blue{$+0.00$} & \red{$-0.10$}  & \blue{$\mathbf{+0.10}$} & $40124$ & $44029$ & $45161$ & $51840$ & $1.10\times$ & $1.13\times$ & $1.29\times$ \\
  \midrule
  \multirow{4}{*}{$\mathrm{Gemma4\text{-}31B}$}
    & $\mathrm{AIME}$ & $88.10$ & $89.60$ & $89.50$ & $90.40$ & \blue{$+1.50$} & \blue{$+1.40$} & \blue{$\mathbf{+2.30}$} & $7741$  & $9620$  & $9695$  & $12882$ & $1.24\times$ & $1.25\times$ & $1.66\times$ \\
    & $\mathrm{GPQA}$ & $85.30$ & $85.40$ & $85.40$ & $85.60$ & \blue{$+0.10$} & \blue{$+0.10$} & \blue{$\mathbf{+0.30}$} & $5521$  & $7014$  & $6987$  & $9320$  & $1.27\times$ & $1.27\times$ & $1.69\times$ \\
    & $\mathrm{LCB}$  & $80.80$ & $81.00$ & $81.50$ & $81.90$ & \blue{$+0.20$} & \blue{$+0.70$} & \blue{$\mathbf{+1.10}$} & $6553$  & $8288$  & $8224$  & $10795$ & $1.26\times$ & $1.26\times$ & $1.65\times$ \\
    & $\mathrm{HMMT}$ & $88.10$ & $88.70$ & $88.10$ & $88.90$ & \blue{$+0.60$} & \blue{$+0.00$} & \blue{$\mathbf{+0.80}$} & $8499$  & $10430$ & $10541$ & $13730$ & $1.23\times$ & $1.24\times$ & $1.62\times$ \\
  \midrule
  \multirow{4}{*}{$\mathrm{R1\text{-}Distill\text{-}32B}$}
    & $\mathrm{AIME}$ & $55.60$ & $55.90$ & $56.40$ & $57.50$ & \blue{$+0.30$} & \blue{$+0.80$} & \blue{$\mathbf{+1.90}$} & $11065$ & $11979$ & $12318$ & $15198$ & $1.08\times$ & $1.11\times$ & $1.37\times$ \\
    & $\mathrm{GPQA}$ & $62.20$ & $62.00$ & $62.30$ & $62.80$ & \red{$-0.20$}  & \blue{$+0.10$} & \blue{$\mathbf{+0.60}$} & $6598$  & $7430$  & $7277$  & $8744$  & $1.13\times$ & $1.10\times$ & $1.33\times$ \\
    & $\mathrm{LCB}$  & $25.80$ & $27.90$ & $28.40$ & $35.00$ & \blue{$+2.10$} & \blue{$+2.60$} & \blue{$\mathbf{+9.20}$} & $9859$  & $10990$ & $10715$ & $12526$ & $1.11\times$ & $1.09\times$ & $1.27\times$ \\
    & $\mathrm{HMMT}$ & $43.80$ & $44.10$ & $44.50$ & $45.50$ & \blue{$+0.30$} & \blue{$+0.70$} & \blue{$\mathbf{+1.70}$} & $12912$ & $13867$ & $14271$ & $17278$ & $1.07\times$ & $1.11\times$ & $1.34\times$ \\
  \midrule
  \textbf{$\mathrm{micro\ avg}$} & $-$ & $64.82$ & $65.18$ & $65.47$ & $66.04$ & \blue{$+0.36$} & \blue{$+0.65$} & \blue{$\mathbf{+1.21}$} & $-$ & $-$ & $-$ & $-$ & $-$ & $-$ & $-$ \\
  \textbf{$\mathrm{macro\ avg}$} & $-$ & $62.11$ & $62.72$ & $63.01$ & $63.75$ & \blue{$+0.61$} & \blue{$+0.90$} & \blue{$\mathbf{+1.65}$} & $16891$ & $18987$ & $19255$ & $23032$ & $1.12\times$ & $1.14\times$ & $1.36\times$ \\
  \bottomrule
  \end{tabular}%
  }
\end{table*}

% Color macros (add to preamble):
% \usepackage{xcolor}
% \newcommand{\green}[1]{\textcolor{teal}{#1}}
% \newcommand{\red}[1]{\textcolor{red!70!black}{#1}}
% \newcommand{\gray}[1]{\textcolor{gray}{#1}}
\vspace{-0.5em}
\subsection{Main Results}    
\label{sec:main-results}
\vspace{-0.5em}
Table~\ref{tab:main-results} summarizes the results.
\vspace{-0.5em}
\paragraph{InsightReplay improves accuracy across all $24$ settings.}
IR3 yields strictly non-negative gains on every cell, with macro-averaged improvement of $+1.65$ points and micro-averaged improvement of $+1.21$ points over Base. The largest gains are observed on \texttt{R1-Distill-32B}/LCB ($+9.2$); the smallest are on settings already close to ceiling, such as \texttt{Qwen3.5-35B}/HMMT ($+0.1$).

\vspace{-0.5em}
\paragraph{Gains are not merely an artifact of extended thinking.}
VO instructs the model to keep thinking after its initial trace but performs no insight extraction; despite using comparable token budgets, VO captures only $+0.61$ of the macro-averaged gain. The remaining $+1.04$, captured by IR3 over VO, is attributable to the InsightReplay mechanism itself. In other words, more than $60\%$ of the total improvement comes from insight extraction and replay rather than from additional thinking time.

\vspace{-0.5em}
\paragraph{Token overhead is moderate.}
IR3 consumes on average $1.36\times$ the tokens of Base, reflecting the insight extraction and replay rounds; IR1 incurs only $1.14\times$ overhead while still delivering a macro-averaged improvement of $+0.90$. Given the consistent accuracy gains across $24$ settings, 
many of which involve baselines already near saturation, this represents a favorable 
accuracy--cost trade-off.

\vspace{-0.5em}
\paragraph{Agentic Evaluation.}
We additionally evaluate InsightReplay in an agentic setting on SWE-Bench-Verified~\citep{swebench}. Due to space constraints, we defer the full protocol and results to Appendix~\ref{app:swe-bench-eval}.

\vspace{-0.5em}
\subsection{Empirical Validation of Peak Accuracy}
\label{sec:inverted-u-validation}
\vspace{-0.5em}
We test the prediction of Theorem~\ref{thm:ir_peak}: InsightReplay 
should raise the achievable peak accuracy. We reuse the samples 
collected for the main results. For each problem, we sort the $16$ 
Baseline completions by token count and bin them into $8$ length 
levels (two samples per level). Binning is performed within each 
problem rather than globally to avoid conflating length with 
difficulty: a global short bin would be dominated by easy problems. 
The VO/IR1/IR3 samples inherit the bin assignment of their parent 
Baseline, so all variants are compared within each length level on 
samples sharing the same Baseline starting points. The Baseline 
curve in Figure~\ref{fig:length-sweep-30b} reports per-bin accuracy 
averaged over the four benchmarks ($8$ data points). For each of 
VO, IR1, and IR3, we compute per-bin accuracy under the same 
binning protocol and report the peak across the $8$ bins, 
shown as a single star marker. See details in 
Appendix~\ref{app:length-sweep-protocol}.
To probe whether the 
trend extrapolates beyond the ablated range, we additionally run 
IR5 on the LiveCodeBench v5 subset; the result is 
reported in Figure~\ref{fig:teaser}.
\begin{figure}[ht]
\centering
\includegraphics[width=\textwidth]{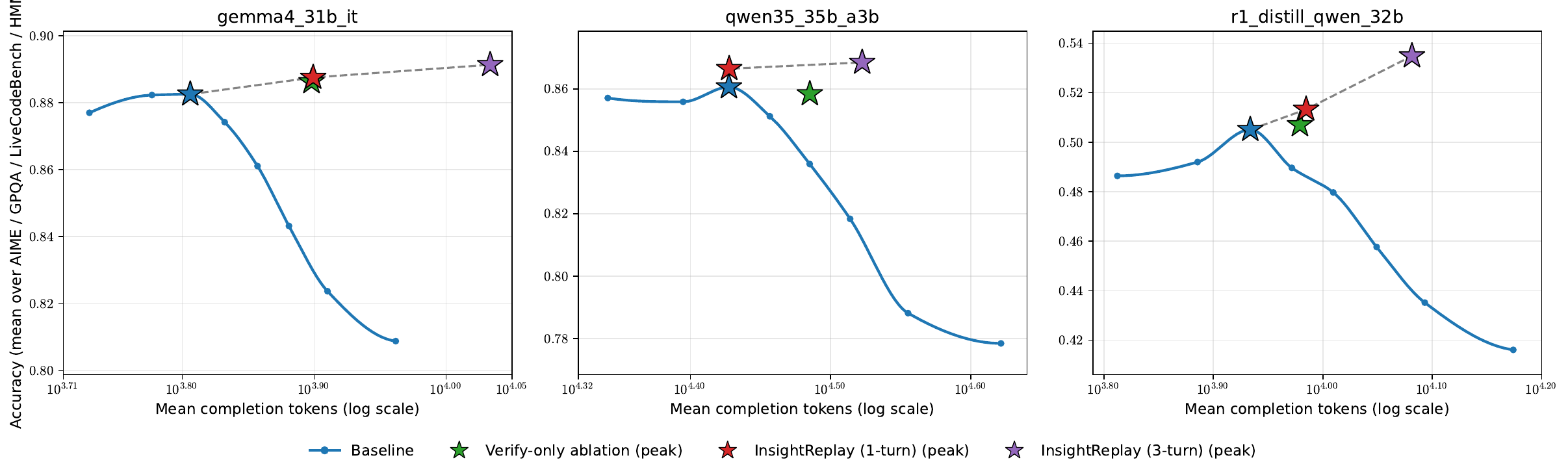}
\caption{Accuracy vs.\ mean completion tokens for three 
$\sim30$B-scale models, averaged over the four benchmarks. The 
Baseline curve traces accuracy across $8$ length levels of standard 
CoT; each star marker reports the peak accuracy of VO, IR1, or IR3 
on the same set of problems. Dashed lines connect the Baseline peak 
to the IR1 and IR3 peaks, visualizing how peak accuracy evolves with the number of replay rounds. Results for 
$\sim8$B-scale models are shown in 
Appendix~\ref{app:length-sweep-protocol} due to space.}

\label{fig:length-sweep-30b}
\end{figure}

\vspace{-0.5em}
Figure~\ref{fig:length-sweep-30b} reports the curves for the three $\sim30$B-scale models. We highlight two findings.

% \paragraph{InsightReplay extends the productive reasoning regime.} 
% Beyond the token budget where the Base curve has already started to 
% decline, IR3 continues to gain accuracy. Concretely, the Base peak 
% occurs at roughly 25K, 6.5K, and 9K tokens on 
% \texttt{Qwen3.5-35B-A3B}, \texttt{Gemma-4-31B-it}, and 
% \texttt{R1-Distill-32B} respectively; past these points, additional 
% reasoning hurts the Base. IR3, in contrast, keeps improving up to 
% roughly 38K, 11K, and 12K tokens on the same three models, 
% converting the Base's degradation regime into a continued-growth 
% regime. This matches Theorem~\ref{thm:ir_shift}'s prediction that 
% mitigating insight decay extends the regime in which longer 
% reasoning remains productive.

\vspace{-0.5em}
\paragraph{The peak accuracy rises.} The IR3 peak exceeds the Base peak on all models: $86.06\%\!\to\!86.83\%$ on \texttt{Qwen3.5-35B-A3B}, $88.25\%\!\to\!89.13\%$ on \texttt{Gemma-4-31B-it}, and $50.51\%\!\to\!53.48\%$ on \texttt{R1-Distill-32B}. This matches Theorem~\ref{thm:ir_peak} that the same mitigation also lifts the achievable peak.

\vspace{-0.5em}
\paragraph{Test-time scaling alone is insufficient.}
Budget-forced thinking alone fails 
to lift the peak. On \texttt{Qwen3.5-35B-A3B}, VO's peak is below 
Base. The peak rise is therefore driven primarily by the 
InsightReplay mechanism rather than extended thinking alone.

\vspace{-0.5em}
\section{Post-Training Experiments}
\label{sec:training}
\vspace{-0.5em}
To study whether InsightReplay can be reinforced through RL, 
we train \texttt{Qwen3-4B-Base}~\citep{qwen3} with GRPO, 
comparing InsightReplay-enabled rollouts against standard CoT 
rollouts. 

\vspace{-0.5em}
\paragraph{Setup.}
We use GRPO~\citep{grpo} as the policy-optimization algorithm and adopt the
DAPO~\citep{dapo} soft overlong reward shaping to stabilize long-CoT
training. The actor is trained on \texttt{DAPO-Math-15K}~\citep{dapo}
with a maximum response length of \(30{,}720\) tokens, a rollout
group size of \(8\) per prompt, and a global batch size of \(256\)
prompts. Validation is performed on AIME~2025 every \(20\) training
steps with \(n{=}32\) samples per problem, from which we report
\(\mathrm{best@32}\), \(\mathrm{maj@32}\), and \(\mathrm{mean@32}\).
Both runs use the same base model, data, and random seed. Full hyperparameters are
provided in Appendix~\ref{app:training-details}.

% \paragraph{Validation Metrics.}
% We use three complementary metrics over the \(32\) samples per problem: 
% \(\mathrm{mean@32}\) (average per-sample accuracy), 
% \(\mathrm{best@32}\) (fraction of problems with at least one correct 
% sample, reflecting upper-tail capability), and \(\mathrm{maj@32}\) 
% (majority vote, reflecting whether the model converges on the correct 
% answer consistently).

\vspace{-0.5em}
\paragraph{Rollout Protocol.}
The two settings differ only in how rollouts are generated. In 
the \textbf{Baseline} setting, the policy 
generates a single thinking trace until it emits the 
EOS token or reaches the maximum response length. 
In the \textbf{InsightReplay} setting, once the policy emits 
the EOS token, we append a fixed continuation cue 
asking the model to extract and replay the insights 
(exact text in Appendix~\ref{app:rollout-cue}). Generation then 
continues under the same length budget, so the total response 
length matches the baseline exactly.
The cue is inserted with $\mathrm{response\_mask}{=}0$ which isolates the effect of the 
InsightReplay reasoning pattern from any auxiliary supervision 
signal.

\vspace{-0.5em}
\paragraph{Alternative Training Designs.}
We also experimented with a more elaborate scheme that explicitly
encodes the multi-round InsightReplay structure during training via
a 2D attention mask combined with per-round trace bundling. While methodologically clean, this design costs more compute than the protocol above. Nonetheless,
we document this attempt in
Appendix~\ref{app:alternative-training-design} for the interested
reader.
\begin{figure*}[ht]
\centering
\includegraphics[width=\textwidth]{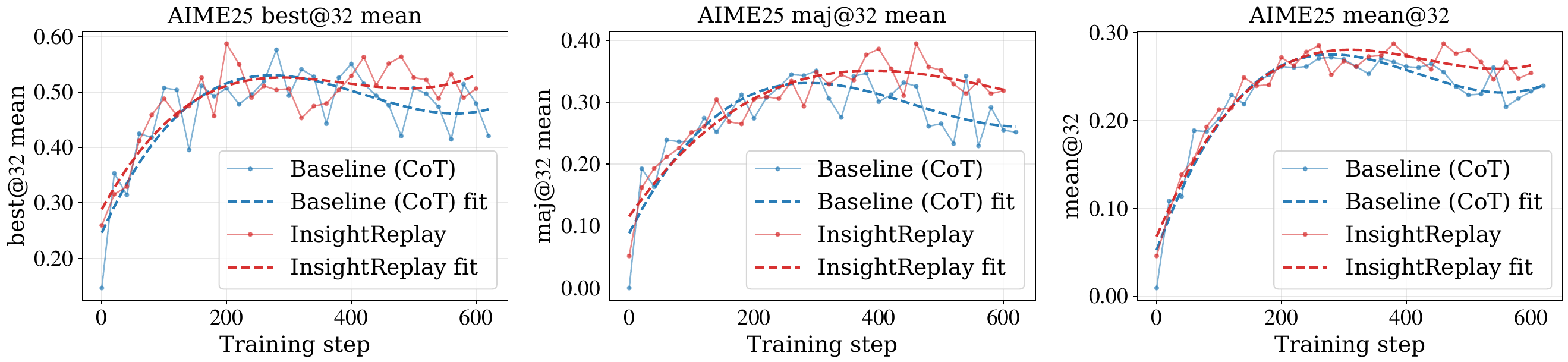}
\caption{Validation accuracy on AIME 2025 throughout GRPO training,
under InsightReplay rollouts vs.\ standard CoT rollouts.
Across all three metrics (\(\mathrm{best@32}\), \(\mathrm{maj@32}\),
\(\mathrm{mean@32}\)). Dashed curves show cubic polynomial trend lines;
the fitting procedure is described in Appendix~\ref{app:training-curve-fit}.}
\label{fig:training-compare-best-mean}
\end{figure*}

\vspace{-0.5em}
\paragraph{Results.} Figure~\ref{fig:training-compare-best-mean} 
reveals three observations comparing InsightReplay against the 
baseline.

\noindent\textbf{(1) Training stability.}
The standard CoT baseline exhibits a pronounced late-training
degradation: after step \(300\), all three metrics decline. The \(\mathrm{mean@32}\) drops from a peak of
\(27.9\%\) at step \(280\) to \(21.6\%\) at step \(560\); \(\mathrm{best@32}\) similarly decays from
\(57.5\%\) at step \(280\) to \(42.0\%\) at step \(620\). In
contrast, InsightReplay maintains stable performance throughout
training: from step \(200\) onward, \(\mathrm{mean@32}\) stays
within the \(25.0\%\)--\(29.0\%\) band, with no monotonic decline
observed up to step \(600\).

\noindent\textbf{(2) Peak accuracy.}\quad
InsightReplay also achieves higher peak validation accuracy across
all three metrics. On \(\mathrm{maj@32}\), InsightReplay peaks at
\(38.8\%\) versus \(34.1\%\) for the baseline, an absolute improvement of \(+4.7\) points; on
\(\mathrm{mean@32}\), \(28.8\%\)versus \(27.9\%\), a gain of \(+0.9\) points; and on
\(\mathrm{best@32}\), \(59.1\%\) versus \(57.5\%\), a gain of \(+1.6\)
points. The peak gap is most pronounced on \(\mathrm{maj@32}\),
which aggregates evidence across the \(32\) samples and is
therefore most sensitive to whether the underlying reasoning is
consistent suggesting that InsightReplay
improves the distribution of outputs.

\noindent\textbf{(3) The advantage is not an initialization effect.}
\quad For the first \(\sim 200\) steps, the two methods' learning 
curves overlap within sampling noise on all three metrics; divergence 
emerges only after step \(\sim 250\). This rules out alternative 
explanations such as the InsightReplay prompt placing the policy in 
a more favorable starting region or its reward shape biasing the 
optimization from the outset. The benefit must be produced during 
training rather than inherited from the setup.

Together, the three observations show that the InsightReplay reasoning 
pattern can be reinforced through RL training rather than only used at 
inference time. Beyond improving peak accuracy, it stabilizes long-CoT 
GRPO training in a regime where the standard CoT baseline suffers a 
pronounced late-training degradation, and does so without relying on 
any early-learning advantage. This positions InsightReplay as a 
reasoning pattern compatible with both inference-time and training-time 
use.

\vspace{-0.5em}
\section{Limitation}
\label{sec:limitations}
\vspace{-0.5em}
Although InsightReplay can deliver substantial absolute gains where
the base model has substantive room to grow, with the largest
single-cell improvement reaching \(+9.2\) points on
\texttt{R1-Distill-32B}'s LiveCodeBench v5 subset (lifting \(25.8\%\) to
\(35.0\%\), \(+35.7\%\) relative), the gains are bounded by each
benchmark's available headroom. On near-saturated settings the
marginal improvement is modest: on \texttt{Qwen3.5-35B-A3B}'s HMMT
subset, IR3 improves over an \(88.5\%\) baseline by only \(+0.1\) points. 
This pattern is consistent with our theoretical framework, in which 
$\Phi(d_0)$ relocation cannot exceed the ceiling imposed by 
$p(N)$ itself. We therefore expect InsightReplay to provide the 
largest practical value on hard reasoning tasks where the underlying 
model is far from saturation.

\vspace{-0.5em}
\section{Conclusion}
\vspace{-0.5em}
We introduced InsightReplay, a stateful reasoning approach that periodically
extracts critical insights from the CoT trace and replays them near the active generation
frontier. By mitigating the attention decay that critical insights suffer in long
trajectories, InsightReplay reshapes the inverted-U between CoT length and
accuracy, shifting the peak rightward and lifting it upward, thereby expanding the
regime in which extended reasoning translates into higher accuracy. We support this
through both theoretical analysis and
empirical validation: positive gains across all $24$ cells of our
scale\,$\times$\,family\,$\times$\,benchmark grid in the pure-inference setting, and
consistent improvements in training stability and peak accuracy when used as a rollout
protocol for post-training.

\newpage
\bibliography{references}

\appendix

\section{Experimental Details for Property~I}
\label{app:property1}

\paragraph{Dataset.}
We use $60$ problems from the American Invitational Mathematics Examination (AIME) spanning the $2024$ and $2025$ competitions.

\paragraph{Thinking trace generation.}
For each problem, we generate $16$ independent chain-of-thought samples using \texttt{Qwen3-8B} in its native thinking mode, and report results averaged over these samples. We use the Qwen3 chat template and prompt the model with:
\begin{quote}
\itshape Solve the following math competition problem step by step. Show your detailed reasoning, then give your final answer as a single integer inside \texttt{\textbackslash boxed\{\}}.
\end{quote}
Generation is performed via vLLM~\citep{kwon2023efficient} with sampling (\texttt{temperature}${}=0.6$, \texttt{top\_p}${}=0.95$), tensor parallelism across $2$ GPUs, \texttt{max\_model\_len}${}=30{,}000$, \texttt{dtype=bfloat16}, and a maximum of $30{,}000$ new tokens. The model's output follows the \texttt{<think>...</think>} convention; we parse the content inside the tags as the thinking trace and the content outside as the final response. For traces that reach the $30{,}000$-token limit before producing a closing \texttt{</think>} tag, we treat the content up to the closing tag as the thinking trace if it appears, and otherwise treat the entire generated output as the thinking trace. All seven conditions in Section~\ref{sec:property1} are evaluated on the same set of $60 \times 16 = 960$ generated traces.

\paragraph{Insight extraction.}
For each thinking trace, we use \texttt{Qwen3-8B} to extract a list of $5$--$7$ key insights. The extraction model receives the original question and the full thinking trace, and is instructed to:
\begin{quote}
\itshape Extract exactly $5$ to $7$ key insights from this thinking, in the order they were derived. Each insight should be one concrete sentence containing specific numbers, equations, or results---not vague descriptions.
\end{quote}
The model produces a numbered list (e.g.\ \textit{1.\ The area of $\triangle ABC$ is $6\sqrt{14}$ by Heron's formula.}), which we parse with a regex-based extractor that captures each enumerated entry. We cap the result at $7$ insights per problem in the rare cases where the model produces more. This phase uses vLLM with greedy decoding, \texttt{max\_tokens}${}=8192$, and the same tensor parallelism configuration as the trace generation phase. Across all $60$ problems and $16$ samples per problem, the mean number of extracted insights is $6.9$, with all extractions falling within the requested $5$--$7$ range.

\paragraph{Probability scoring.}
We load \texttt{Qwen3-8B} via HuggingFace Transformers in \texttt{bfloat16} precision on a single GPU.
For each problem, we construct a prefix string following the Qwen3 chat template:
\begin{verbatim}
<|im_start|>user
{user_prompt}<|im_end|>
<|im_start|>assistant
<think>
{THINK_CONTENT}
</think>

\boxed{
\end{verbatim}
where \texttt{THINK\_CONTENT} varies across conditions (see below).
We then append the ground-truth answer tokens $a_{\mathrm{gt}} = (a_1, a_2, \ldots, a_K)$ (the tokenized integer answer) to the prefix and perform a single forward pass over the concatenated sequence. The score is the joint log-probability of the answer tokens conditioned on the prefix:
\begin{equation}
  \log P_\theta(a_{\mathrm{gt}} \mid x_{\mathrm{prefix}})
  = \sum_{k=1}^{K} \log P_\theta\!\bigl(a_k \mid x_{\mathrm{prefix}}, a_{<k}\bigr),
\end{equation}
where $x_{\mathrm{prefix}}$ denotes the prompt up to and including the opening token sequence \texttt{\textbackslash boxed\{}, and $K$ is the number of tokens in the tokenized ground-truth answer.

\paragraph{Condition definitions.}
Let $\mathbf{q}$ denote the problem statement, $\mathbf{t}$ the full thinking trace generated in the previous paragraph, and $\mathbf{f} = (f_1, \dots, f_K)$ the insights extracted by the extraction step (formatted as a numbered list). Let $\mathbf{r}$ denote a sequence of pseudo-random tokens whose token count matches that of $\mathbf{t}$ for the same problem. The seven conditions differ only in \texttt{THINK\_CONTENT}:
\begin{itemize}
  \item[\textbf{(A)}] \textbf{Full thinking (baseline)}: $\mathbf{t}$
  \item[\textbf{(B)}] \textbf{Repeated question + insights}: \texttt{Question:}\,$\mathbf{q}$, followed by $\mathbf{f}$
  \item[\textbf{(C)}] \textbf{Insights only}: $\mathbf{f}$
  \item[\textbf{(D)}] \textbf{Full thinking + repeated question + insights}: $\mathbf{t}$, then \texttt{Question:}\,$\mathbf{q}$, followed by $\mathbf{f}$
  \item[\textbf{(E)}] \textbf{Full thinking + insights}: $\mathbf{t}$, followed by $\mathbf{f}$
  \item[\textbf{(F)}] \textbf{Random tokens + repeated question + insights}: $\mathbf{r}$, then \texttt{Question:}\,$\mathbf{q}$, followed by $\mathbf{f}$
  \item[\textbf{(G)}] \textbf{Random tokens + insights}: $\mathbf{r}$, followed by $\mathbf{f}$
\end{itemize}
Adjacent components are concatenated with two newline characters (\texttt{\textbackslash n\textbackslash n}). To construct $\mathbf{r}$, we sample strings of the form \texttt{xq\{0--9999\}} with a fixed random seed of $42$ for reproducibility, then truncate or pad at the token level to match $|\mathrm{tokenize}(\mathbf{t})|$ exactly.

\paragraph{Software and hardware.}
All experiments use vLLM $0.19.0$ for trace generation and insight extraction, and HuggingFace Transformers $5.6.0$ for probability scoring. Models are served in \texttt{bfloat16} precision. Trace generation and insight extraction each run on $2{\times}$ NVIDIA GPUs with tensor parallelism; probability scoring runs on a single GPU.

\section{Experimental Details for Property~II}
\label{app:property2}

\paragraph{Models.}
\texttt{Qwen3-8B}: 36 layers, 32 query heads, 8 KV heads via GQA, head dim $128$, full RoPE, with QK-norm. \texttt{Gemma-4-31B-it}: alternates sliding-window and global-attention layers with different head dimensions and per-layer \texttt{partial\_rotary\_factor}; we extract scores only from global layers (head\_dim $=512$) so the attention pattern is not truncated by the sliding window. The insight-extraction model is \texttt{Qwen3.5-397B-A17B}.

\paragraph{CoT generation.}
Each target model generates a complete CoT trace via vLLM with greedy decoding (\texttt{max\_new\_tokens}$=$\texttt{38{,}000}, \texttt{temperature}$=$\texttt{1.0}, \texttt{do\_sample}$=$\texttt{False}, \texttt{bfloat16}). The user prompt is:
\begin{quote}
\itshape Solve the following math problem step by step. Show your complete reasoning, then give the final answer inside \texttt{\textbackslash boxed\{\}}.
\end{quote}
For Qwen3-8B we wrap the trace in \verb|<think>...</think>|; for Gemma-4 we use the native thought block. The ground-truth answer is appended as \verb|\boxed{<answer>}| at the fixed position $p_a$ (the first token after \verb|\boxed{|).

\paragraph{Insight extraction and token-level localization.}
We prompt \texttt{Qwen3.5-397B-A17B} to list $5$--$7$ critical insights, each containing a verbatim span copied from the trace. Each insight is then relocated in the full tokenized sequence via a case-sensitive substring match plus character-to-token offset mapping using the target model's tokenizer, yielding \texttt{(start\_token\_idx, end\_token\_idx)} for the insight span $c$. Insights whose verbatim span cannot be recovered, or whose sentence position leaves fewer than one downstream sentence before $p_a$, are discarded. This yields an average of ${\sim}6$ usable insights per problem.

\paragraph{Filler unit calibration.}
The filler unit \verb|"\n "| is calibrated per tokenizer at runtime to determine its token cost $\tau$. The number of filler units actually inserted at ratio $\rho$ is $N = \lfloor \lfloor \rho \cdot d \rfloor / \tau \rfloor$, and the realized distance after insertion is $d' = d + N \cdot \tau$. Random allocation of fillers across sentence boundaries uses a seeded RNG (\texttt{seed}$=$\texttt{42} plus problem-id hash) for reproducibility.

\paragraph{Attention-score extraction.}
We run a layer-by-layer forward pass with \texttt{bfloat16}. At each layer $\ell$ we recompute the post-RoPE query vector $\mathbf{q}^{(\ell,h)}_{p_a}$ and post-RoPE key vectors $\mathbf{k}^{(\ell,h)}_{t}$ for $t \in c$, and store the per-layer score $s^{(\ell)}(c, a)$ together with the realized distance $d'$. For GQA models we map each query head to its shared KV head via integer division by the GQA group size.

\paragraph{Model-specific handling.}
\texttt{Qwen3-8B} applies standard full-dimension RoPE and QK-norm; we replicate the ordering q-proj $\to$ q-norm $\to$ RoPE, and likewise for keys. \texttt{Gemma-4-31B-it} uses head\_dim $=512$ on global layers; to avoid materializing the full attention matrix at this size we use PyTorch SDPA for \texttt{head\_dim}$>$\texttt{256} and fall back to \texttt{flash\_attention\_2} elsewhere. Only the explicit $\mathbf{q}\cdot\mathbf{k}$ computation for the (answer, insight) pair enters the logging path.

\paragraph{Aggregation and reporting.}
Scores are aggregated per $(\ell, \rho)$ by first averaging across samples, then across insights within a problem, then across problems. For distance-based plots, we bin by realized distance $d'$ with bin size $500$ tokens. We additionally identify candidate ``receiver'' heads as the top-$32$ heads ranked by score kurtosis across variants.

\paragraph{Compute.}
Attention extraction uses $8$-way data parallelism across NVIDIA H200 GPUs. Each worker loads one full model instance (\texttt{Qwen3-8B} on one GPU; \texttt{Gemma-4-31B-it} sharded across multiple GPUs via \texttt{device\_map}). All intermediate artifacts (CoTs, insights, token locations, variants, raw scores) are persisted to JSONL/CSV and released with the code.

\section{Proofs of Theoretical Results}
\label{app:proofs}

\subsection{Preparation}
%==============================================================================

Define the baseline log-accuracy term
\begin{equation}
    \ln A_0(N) := N \ln p(N), \qquad g(N) := \frac{d}{dN}\bigl[N \ln p(N)\bigr].
    \label{eq:gdef}
\end{equation}

\subsubsection{Proof of Lemma~\ref{lem:concave}}

\begin{proof}
Let $c:=T/M$. From Eq.~\eqref{eq:A0}, we can write
\begin{equation}
    \ln A_0(N) = \ln \alpha + N\ln(1-\sigma) + N\ln\!\left(1-\frac{c}{N}\right).
\end{equation}
The first term is constant and the second is linear in $N$, so it suffices to study
\begin{equation}
    h(N) := N\ln\!\left(1-\frac{c}{N}\right), \quad
    h'(N) = \ln\!\left(1-\frac{c}{N}\right) + \frac{c}{N-c},  \quad
    h''(N) = -\frac{c^2}{N(N-c)^2} < 0.
\end{equation}
% Computing the second derivative,
% \begin{equation}
%     h''(N) = \frac{c}{N(N-c)} - \frac{c}{(N-c)^2} = -\frac{c^2}{N(N-c)^2} < 0 \qquad \text{for all } N>c.
% \end{equation}
Hence $(\ln A_0)''(N)<0$ for all $N>T/M$, proving strict concavity. Since $g(N)=(\ln A_0)'(N)$, it follows immediately that
\begin{equation}
    g'(N) = (\ln A_0)''(N) < 0 \qquad \text{for all } N>T/M.
    \label{eq:g_decreasing}
\end{equation}
When $N \leq T/M$, we have $E=T/(NM) \geq 1$, which would imply a per-step error rate of at least $100\%$ and is therefore outside the meaningful regime of the model.
\end{proof}

% \begin{lemma}[Uniqueness of Optima]
% \label{lem:unique}
% $A_\Phi$ and $A_{\mathrm{IR}}$ each have a unique global maximum on $(0,\infty)$.
% \end{lemma}

\subsubsection{Proof of Lemma~\ref{lem:unique}}

\begin{proof}
For analytical convenience, we approximate $\sum_{i=1}^{N}\ln\Phi(i)$ by $\int_1^N \ln\Phi(x)\,dx$, so that
\begin{equation}
    \frac{d}{dN}\sum_{i=1}^{N}\ln\Phi(i) \;\approx\; \ln\Phi(N).
    \label{eq:sum_approx}
\end{equation}
Taking logarithms of Eq.~\eqref{eq:A_both},
\begin{equation}
    \ln A_\Phi(N) = \ln A_0(N) + \sum_{i=1}^{N}\ln\Phi(i), \qquad
    \ln A_{\mathrm{IR}}(N) = \ln A_0(N) + N\ln\Phi(d_0).
\end{equation}
Differentiating and applying Eq.~\eqref{eq:sum_approx} and Eq.~\eqref{eq:gdef},
\begin{equation}
    (\ln A_\Phi)'(N) = g(N) + \ln \Phi(N), \qquad
    (\ln A_{\mathrm{IR}})'(N) = g(N) + \ln \Phi(d_0).
    \label{eq:deriv_both}
\end{equation}
Setting these to zero gives the optimality conditions
\begin{equation}
    g(N_\Phi^*) = -\ln \Phi(N_\Phi^*), \qquad g(N_{\mathrm{IR}}^*) = -\ln \Phi(d_0).
    \label{eq:opt_cond}
\end{equation}
We prove the claim for $A_\Phi$; the argument for $A_{\mathrm{IR}}$ is analogous. On the meaningful domain $N > T/M$, as $N\to (T/M)^+$, $g(N)\to +\infty$ while $\ln\Phi(N)$ remains finite, so $(\ln A_\Phi)'(N) = g(N) + \ln\Phi(N) \to +\infty > 0$. As $N\to\infty$, $g(N)\to \ln(1-\sigma)<0$ and $\ln\Phi(N)\to -\infty$, so $(\ln A_\Phi)'(N)\to -\infty < 0$. By the intermediate value theorem, a critical point exists. Since $g(N)$ is strictly decreasing (Lemma~\ref{lem:concave}) and $-\ln\Phi(N)$ is strictly increasing (Assumption~\ref{assump:decay}), the optimality condition $g(N)=-\ln\Phi(N)$ has at most one solution, so the maximizer is unique.
\end{proof}

%==============================================================================
\subsection{Main Results}
%==============================================================================

\subsubsection{Proof of Theorem~\ref{thm:ir_shift}}

\begin{proof}
By design, InsightReplay places the relocated insight adjacent to the active generation frontier, so $d_0 \ll N_\Phi^*$ in any meaningful regime. Combined with the strict monotonicity of $\Phi$ (Assumption~\ref{assump:decay}), this yields $\Phi(d_0) > \Phi(N_\Phi^*)$, and hence $-\ln\Phi(d_0) < -\ln\Phi(N_\Phi^*)$. By the optimality conditions in Eq.~\eqref{eq:opt_cond}, $g(N_\Phi^*) = -\ln\Phi(N_\Phi^*)$ and $g(N_{\mathrm{IR}}^*) = -\ln\Phi(d_0)$, so $g(N_{\mathrm{IR}}^*) < g(N_\Phi^*)$. Since $g$ is strictly decreasing (Eq.~\eqref{eq:g_decreasing}), this implies $N_{\mathrm{IR}}^* > N_\Phi^*$.
\end{proof}

\subsubsection{Proof of Theorem~\ref{thm:ir_peak}}

\begin{proof}
Assume $d_0 < 1$ (InsightReplay places the insight within less than one step's distance).
Then for all $i \geq 1$, since $\Phi$ is strictly decreasing (Assumption~\ref{assump:decay}, condition~2), $\Phi(d_0) > \Phi(i)$.
Therefore, for any $N \geq 1$:
\begin{equation}
    A_{\mathrm{IR}}(N)
    = [p(N)]^N \cdot [\Phi(d_0)]^N
    > [p(N)]^N \cdot \prod_{i=1}^{N} \Phi(i)
    = A_\Phi(N).
\end{equation}
In particular, at $N = N_\Phi^*$: $A_{\mathrm{IR}}(N_\Phi^*) > A_\Phi(N_\Phi^*)$.
Since $N_{\mathrm{IR}}^*$ maximizes $A_{\mathrm{IR}}$:
\begin{equation}
    A_{\mathrm{IR}}(N_{\mathrm{IR}}^*)
    \geq A_{\mathrm{IR}}(N_\Phi^*)
    > A_\Phi(N_\Phi^*).
\end{equation}
\end{proof}

\
\section{Pure Inference Experiment Details}
\label{app:experiment-details}

This appendix expands on the experimental setup of Section~\ref{sec:experiments}, covering inference infrastructure, decoding configuration, prompting templates, evaluation protocol, and reproducibility notes that were omitted from the main text for brevity.

\subsection{Hardware and Inference Stack}
All inference experiments are run on a single node with $8\times$ NVIDIA H200 141GB GPUs interconnected via NVLink. Models are 
served through vLLM~\citep{kwon2023efficient} v$0.19.0$ via its OpenAI-compatible HTTP endpoint, with one server instance per model and all $8$ GPUs  
visible to that instance. The parallelism layout depends on model size: the 30B-tier models (Qwen3.5-35B-A3B, Gemma-4-31B-it,
R1-Distill-Qwen-32B) use tensor parallelism $\text{TP}=2$ with data parallelism $\text{DP}=4$, while the 8B-tier models        
(Qwen3.5-9B, Gemma-4-E4B-it, R1-Distill-Qwen-7B) use $\text{TP}=1$ with $\text{DP}=8$. KV-cache memory utilization is capped at
$0.92$. The maximum context length is set per model family at the largest value the KV budget can hold under the chosen
$\text{DP}$: $200{,}000$ tokens for Qwen and Gemma, and the native $131{,}072$ tokens for the R1-distill family. We pass
\texttt{--trust-remote-code} so that custom model code -- e.g.\ Gemma's interleaved local/global attention with sliding-window
size $1024$ -- loads with its released configuration; no custom attention modification is applied on our side.

\subsection{Decoding Configuration}
We sample with temperature $\tau=1.0$ and nucleus $\text{top-}p=0.95$; all other vLLM sampling parameters (e.g.\               
\texttt{top\_k}, \texttt{repetition\_penalty}, presence/frequency penalties) are left at their library defaults so the         
comparison is not biased by ad-hoc decoding tweaks. We pass \texttt{skip\_special\_tokens=False} so that thinking-block markers
($\langle$\texttt{think}$\rangle$\allowbreak\,/\,\allowbreak$\langle$\texttt{/think}$\rangle$ for Qwen and R1-distill,
$\langle|$\texttt{channel}$\rangle$\texttt{thought}\allowbreak\,/\,\allowbreak$\langle$\texttt{channel}$|\rangle$ for Gemma) appear in the raw
response; downstream extraction relies on these markers to split the thinking trace from the post-think tail. The $k=16$
samples per problem are produced by issuing $k$ independent requests with $n=1$ rather than a single request with $n=k$: this
guarantees that vLLM's continuous-batching scheduler treats each sample as an independent unit, and that an HTTP retry on a
transient error does not invalidate sibling samples. Independence across samples is provided by vLLM's internal RNG; no
per-sample seed is pinned.

\subsection{Prompting Templates}
\paragraph{User-turn header.} The user-turn content is a dataset-specific header concatenated with the formatted question; the 
chat template then wraps it into a model-specific prompt. The header explicitly specifies the answer-format the grader keys on:
\begin{itemize}[leftmargin=*,itemsep=2pt]                                                                                      
\item \textbf{AIME}: ``\emph{Solve the math problem. Final answer must be a single non-negative integer in                   
$\langle$\texttt{Answer}$\rangle$\allowbreak\dots\allowbreak$\langle$\texttt{/Answer}$\rangle$.}''                                                   
\item \textbf{GPQA Diamond}: ``\emph{Select the best answer. Final answer must be a single letter (A, B, C, or D) in
$\langle$\texttt{Answer}$\rangle$\allowbreak\dots\allowbreak$\langle$\texttt{/Answer}$\rangle$.}'' 
\item \textbf{LiveCodeBench v5}: ``\emph{Solve the coding task. Put your final solution in a single Python code block
delimited by \texttt{\textasciigrave\textasciigrave\textasciigrave python ... \textasciigrave\textasciigrave\textasciigrave}.}'' 
\item \textbf{HMMT (Feb \& Nov 2025)}: ``\emph{Solve the math problem. Final answer must be the EXACT value (simplified, no  
decimal approximations) in \texttt{\textbackslash boxed\{...\}}. If there are multiple solutions, list all of them inside the  
single \texttt{\textbackslash boxed\{...\}} separated by commas.}'' The boxed-LaTeX format is used (rather than the
$\langle$\texttt{Answer}$\rangle$ tag of AIME/GPQA) because HMMT answers are arbitrary symbolic expressions (e.g.\             
\texttt{\textbackslash frac\{1\}\{2\}}, \texttt{8\textbackslash sqrt\{10\}}) that are graded by symbolic verification rather
than string equality.
\end{itemize}
                                                      
\paragraph{InsightReplay injection.} InsightReplay is a single post-hoc edit of the baseline response: once the baseline emits the thinking-close marker, we (i) ask the model
to enumerate the important conclusions of its own thinking (ii) splice a self-verification block back in \emph{before} the
thinking-close marker; and (iii) let the model continue from that point until it re-emits a thinking-close marker and writes a
fresh final answer. The block is phrased in the model's first person so that, embedded inside the thinking block, it reads as a
continuation of the internal monologue rather than an external instruction. Concretely the block contains:
\begin{itemize}[leftmargin=*,itemsep=2pt]
\item an opening line ``\emph{Wait, before I commit to a final answer, let me restate what's being asked and double-check the
key conclusions I've been relying on}'';                                                 
\item a one-line restatement of the user's header and the question, so the model does not drift from the task in a long      
thinking stream;                                       
\item a numbered list of the extracted insights under ``\emph{Key conclusions so far:}                                 
\end{itemize}                                             

\subsection{Datasets and Preprocessing}                
We use four benchmarks. \textbf{AIME 2025/2026} concatenates the official problem sets of both years, yielding $60$            
short-answer problems ($30$ per year) with non-negative integer answers; each record carries a \texttt{<year>-<idx>} identifier
and the gold answer string. \textbf{GPQA Diamond} is the full $198$-question Diamond split; the upstream release already      
embeds the four answer choices into each problem's \texttt{question} field in their original order (mixing ``A.\ \dots'', ``A)\
\dots'', and ``a)\ \dots'' styles across problems), with the structured \texttt{choices} field left null, so we do not shuffle
the options and we do not append choice lines on our side -- the model sees the question verbatim as released.
\textbf{LiveCodeBench v5} is a fixed $62$-problem uniform subsample of the full $880$-problem v5 release, drawn once to bound
the compute cost of running 30B-class models with $k=16$ samples and ${\sim}20{,}000$-token CoT lengths; the subsample is
shipped with the data files, so all models see the identical $62$-problem subset, with the official hidden test cases attached
verbatim. \textbf{HMMT (Feb \& Nov 2025)} concatenates the MathArena releases of HMMT February 2025 and HMMT November 2025,
yielding $60$ short-answer problems ($30$ per contest, sources \texttt{MathArena/hmmt\_feb\_2025} and
\texttt{MathArena/hmmt\_nov\_2025}); unlike AIME, HMMT gold answers are arbitrary LaTeX expressions (rationals, radicals, exact
closed forms) rather than integers in $[0,999]$, so we keep them as raw strings and defer simplification to the symbolic
grader (Section~\ref{app:eval-protocol}). We apply no few-shot exemplars, no rephrasing of the math/science text, and no
rewriting of the code prompts.

\subsection{Evaluation Protocol}
\label{app:eval-protocol}
The per-sample \texttt{correct} flag is computed by first stripping the model's thinking block (everything up to and including 
the last thinking-close marker) and then applying dataset-specific extraction:
\begin{itemize}[leftmargin=*,itemsep=2pt]                                                                                      
\item \textbf{AIME.} The post-think tail is searched for an integer answer using an ordered list of patterns:
$\langle$\texttt{Answer}$\rangle\,N\,\langle$\texttt{/Answer}$\rangle$, \texttt{\textbackslash boxed\{N\}},                    
\texttt{\textbackslash Answer\{N\}}, ``\texttt{Answer: N}'' (with markdown bold/italic tolerated around the word
``\emph{Answer}'' and between the colon and the digit), and a final ``open $\langle$\texttt{Answer}$\rangle$ without a closing 
tag'' fallback. The first pattern that matches wins; ties within a pattern resolve to the last occurrence. Grading is exact
integer comparison against the gold answer.
\item \textbf{GPQA Diamond.} The same priority list adapted to single-letter answers \texttt{A}--\texttt{D}. We deliberately
do \emph{not} match ``\texttt{Option A/B/C/D}'' phrases inside the reasoning, which would otherwise silently override the      
model's actual final answer with the last-mentioned option. Grading is case-insensitive letter equality.
\item \textbf{LiveCodeBench v5.} The last \texttt{\textasciigrave\textasciigrave\textasciigrave python ... \textasciigrave\textasciigrave\textasciigrave} block in the post-think tail is taken as the candidate  
solution and executed in a Docker sandbox (\texttt{python:3.10-slim}, \texttt{--network none}, \texttt{--cpus 1},              
\texttt{--memory 2g}) against the full hidden test suite. The runner walks all test cases serially within a single sandbox
invocation under a $20$-second wall-clock budget covering all tests for that candidate; a sample is correct iff every test     
passes. We grade $8$ candidates in parallel via a process pool. Function-style tests call \texttt{Solution().<func>(*args)};
stdin-style tests pipe the input to \texttt{python solution.py} and compare stdout (after stripping leading/trailing
whitespace) against the expected output.
\item \textbf{HMMT.} A LaTeX-aware grader is used because gold answers are arbitrary symbolic expressions. We first extract
the candidate's final answer from the post-think tail with a brace-counting scan over the \emph{last} \texttt{\textbackslash   
boxed\{...\}} (brace counting handles nested \texttt{\{\}} that a flat regex would mis-balance), with end-of-text fallbacks to
\texttt{\$\$\,...\,\$\$}, \texttt{\textbackslash[\,...\,\textbackslash]}, inline \texttt{\$\,...\,\$} (with a leading          
``\textit{lhs}\,$=$'' stripped), and finally a bare standalone final line. All fallbacks are restricted to the end of the
response so math fragments mid-reasoning cannot win. Equivalence is checked with the \texttt{math\_verify} library (HuggingFace
MathArena) configured with \texttt{float\_rounding=3} (${\sim}5{\times}10^{-4}$ tolerance) plus two robustness fallbacks: (i)
if the candidate contains \texttt{\textbackslash pm}, we expand it into both signed branches and additionally retry the
comma-joined two-solution form, since the prompt asks for multi-solution sets to be packed into a single \texttt{\textbackslash
boxed\{\}} separated by commas; (ii) if symbolic verification fails, we evaluate gold and candidate to $30$-digit
\texttt{sympy} floats and accept on relative tolerance $10^{-3}$, catching cases where a model truncates a decimal just inside
the symbolic-rounding cutoff. This setup judges \texttt{\textbackslash sqrt\{640\}} equivalent to \texttt{8\textbackslash
sqrt\{10\}} and \texttt{0.5} equivalent to \texttt{\textbackslash frac\{1\}\{2\}}, while keeping \texttt{0.6} and
\texttt{\textbackslash frac\{1\}\{2\}} distinct.
\end{itemize}

\subsection{Case Analysis}
\label{app:case-analysis}

This subsection illustrates the InsightReplay inference pipeline (one extraction
+ one verification round) on a concrete example from our evaluation runs.

\paragraph{Problem.}

\begin{mdframed}[backgroundcolor=gray!10]
Let $A$ be the set of positive integer divisors of $2025$. Let $B$ be a
randomly selected subset of $A$. The probability that $B$ is a nonempty
set with the property that the least common multiple of its elements is
$2025$ is $\frac{m}{n}$, where $m$ and $n$ are relatively prime positive
integers. Find $m + n$.
\end{mdframed}

(AIME 2025, Problem 22; correct answer is $237$. We use \textsc{gemma-4-e4b-it} for this example.)

\paragraph{Color key.}
Across the three passes below, the visual coding is consistent:
\colorbox{cyan!10}{\textbf{cyan}} blocks are model \emph{inputs} (the
prompts the model is given), \colorbox{blue!8}{\textbf{blue}} blocks are
model \emph{outputs} (the generations we read back), the
\colorbox{orange!25}{\textbf{orange}} sub-block inside Pass~3's input
marks the portion that is \emph{spliced in by the inference harness}
rather than written by the model, and any
\colorbox{red!8}{\textit{red italic}} text is an \emph{editorial
annotation by us} --- elisions of long verbatim content, computed
parameter values, or explanatory notes --- that is \emph{not} part of
the literal prompt or output.

\subsubsection*{Pass 1: Baseline Reasoning}

\paragraph{Input.}

\begin{mdframed}[backgroundcolor=cyan!10]
\texttt{User:} Let $A$ be the set of positive integer divisors of $2025$.
Let $B$ be a randomly selected subset of $A$. The probability that $B$
is a nonempty set with the property that the least common multiple of
its elements is $2025$ is $\frac{m}{n}$, where $m$ and $n$ are relatively
prime positive integers. Find $m + n$. The final answer must be a single
non-negative integer wrapped in \texttt{<Answer>...</Answer>}.
\end{mdframed}

\paragraph{Output.}

\begin{mdframed}[backgroundcolor=blue!8]
\texttt{<think>}\\
The set of divisors of $2025 = 3^4 \cdot 5^2$ has size
$|A| = (4+1)(2+1) = 15$, so the total number of subsets is $2^{15}$.
By inclusion--exclusion on the conditions ``no element with $v_3(d)=4$''
($P_3$) and ``no element with $v_5(d)=2$'' ($P_5$):
\begin{align*}
|P_3| &= 2^{12},\quad |P_5| = 2^{10},\quad |P_3 \cap P_5| = 2^8,\\
K &= 2^{15} - 2^{12} - 2^{10} + 2^8 \\
  &= 2^8 \, (2^7 - 2^4 - 2^2 + 1) \\
  &= 256 \, (128 - 16 - 4 + 1) \\
  &= 256 \cdot 105.
\end{align*}
So $P = \frac{256 \cdot 105}{256 \cdot 128} = \frac{105}{128}$,
giving $m=105$, $n=128$, and $m+n = 233$.\\
\texttt{</think>}

\texttt{<Answer>233</Answer>}
\end{mdframed}

The model's reasoning is largely correct, but it commits a subtle
arithmetic slip in the very last simplification:
$128 - 16 - 4 + 1 = 109$, not $105$. The slip is reported
\emph{verbatim} as the final answer, $233$.

For this trace the baseline thinking body is $3326$ tokens, which the
adaptive cap formula $\min(11, 2 + \lfloor (t-1)/5000 \rfloor)$ maps to
$K = 2$ insights; this $K$ flows through into the extraction prompt
below.

\subsubsection*{Pass 2: Extraction}

\paragraph{Input.} The model is given a fresh prompt that wraps its own
\texttt{<think>} body from Pass~1 and asks for a faithful, non-evaluative
listing of the conclusions reached.

\begin{mdframed}[backgroundcolor=cyan!10]
\texttt{User:} You just finished an internal reasoning pass on the
problem below. List the most important concrete conclusions you reached
during that pass --- the facts, equations, intermediate results, or
commitments that your final answer will rest on. These will be fed back
to you as a checklist to verify before you commit.

\texttt{RULES:}
\begin{itemize}[leftmargin=1.5em,itemsep=0pt]
    \item Output AT MOST \colorbox{red!8}{\textit{$K$ (from thinking-body
    length; see Pass~1)}} insights (fewer is fine).
    Pick the most load-bearing.
    \item Each insight: one short sentence, factual and specific.
    \item Do NOT re-derive, do NOT evaluate correctness, do NOT restate the final answer.
    \item Output format: plain bullet list, one insight per line, each line starting with `\texttt{-} '.
\end{itemize}

\texttt{Problem:} \colorbox{red!8}{\textit{(problem restated verbatim --- identical to Pass 1's input)}}

\texttt{Your prior reasoning:}\\
\colorbox{red!8}{\parbox{0.93\linewidth}{\textit{(Pass~1's \texttt{<think>}
body verbatim --- the divisor count, inclusion--exclusion derivation,
and final arithmetic ending at $K = 256 \cdot 105$ and $m + n = 233$.
Elided here for space; in the actual prompt this is several hundred
tokens of literal text from Pass~1's output.)}}}

\texttt{Key conclusions to verify (bullet list only):}
\end{mdframed}

\paragraph{Output.}

\begin{mdframed}[backgroundcolor=blue!8]
\texttt{- } The number of subsets $B \subseteq A$ such that
$\text{lcm}(B) = 2025$ is $K = 2^{15} - 2^{12} - 2^{10} + 2^8 =
256 \cdot 105$.

\texttt{- } The resulting probability is $\frac{m}{n} = \frac{105}{128}$,
where $m$ and $n$ are relatively prime positive integers.
\end{mdframed}

The extractor faithfully surfaces the model's own conclusions ---
\emph{including the arithmetic bug that $256(128-16-4+1) = 256 \cdot 105$}.
This faithfulness is critical: the next pass needs to see the model's
actual reasoning chain, not a sanitized version, otherwise the verification
step has nothing concrete to check.

\subsubsection*{Pass 3: Verification and Commit}

\paragraph{Input.} The model receives a re-thought version of its
original \texttt{<think>} block: the original reasoning from Pass~1 is
preserved verbatim \emph{up to but not including} the closing
\texttt{</think>} marker, and immediately before that marker the
inference harness splices in a structured verification block (highlighted
in orange below). The injection is phrased in the model's own voice so
the continuation reads as a single uninterrupted thinking stream.

\begin{mdframed}[backgroundcolor=cyan!10]
\texttt{User:} \colorbox{red!8}{\textit{(problem statement, identical to Pass~1's input)}}

\texttt{Assistant:} \texttt{<think>}\\
\colorbox{red!8}{\parbox{0.93\linewidth}{\textit{(Pass~1's
\texttt{<think>} body verbatim --- divisor count, inclusion--exclusion
setup, derivation of $K = 256 \cdot 105$, and conclusion $m + n = 233$
--- preserved up to but not including the closing \texttt{</think>}
marker. Elided here for space.)}}}

\begin{mdframed}[backgroundcolor=orange!25,linewidth=0pt,innermargin=2pt,innerleftmargin=6pt,innerrightmargin=6pt,skipabove=4pt,skipbelow=4pt]
\raggedright
\colorbox{red!8}{\textit{[Spliced in by harness]}}

\texttt{Wait, before I commit to a final answer, let me restate what's
being asked and double-check the key conclusions I've been relying on.}

\texttt{The user's request:} Solve the math problem. Final answer must
be a single non-negative integer in \texttt{<Answer>...</Answer>}.

\texttt{Question:} Let $A$ be the set of positive integer divisors of
$2025$ \colorbox{red!8}{\textit{\dots (problem restated verbatim) \dots}} Find $m+n$.

\texttt{Key conclusions so far:}
\begin{enumerate}[leftmargin=1.5em,itemsep=0pt]
    \item The number of subsets $B \subseteq A$ such that
    $\text{lcm}(B) = 2025$ is $K = 2^{15} - 2^{12} - 2^{10} + 2^8 =
    256 \cdot 105$.
    \item The resulting probability is $\frac{m}{n} = \frac{105}{128}$,
    where $m$ and $n$ are relatively prime positive integers.
\end{enumerate}

\texttt{Before finalizing, my current working answer is 233.}

\texttt{Let me verify each of these conclusions and check whether they
actually support this answer --- or whether I've missed something that
would change it.}
\end{mdframed}

\colorbox{red!8}{\textit{(The model now continues from this point,
still inside the same \texttt{<think>} block.)}}
\end{mdframed}

\paragraph{Output.}

\begin{mdframed}[backgroundcolor=blue!8]
Let me re-check each conclusion. The set-up is correct: $|A|=15$,
$|P_3|=2^{12}$, $|P_5|=2^{10}$, $|P_3 \cap P_5|=2^8$. Direct evaluation:
\[
K = 32768 - (4096 + 1024 - 256) = 32768 - 4864 = 27904.
\]
Now I'll re-check the factored form
$K = 2^8 (2^7 - 2^4 - 2^2 + 1)$:
$2^7 - 2^4 - 2^2 + 1 = 128 - 16 - 4 + 1$. Compute step by step:
$128 - 16 = 112$, $112 - 4 = 108$, $108 + 1 = 109$.

\textbf{The previous simplification was erroneous:}
$128 - 16 - 4 + 1 = 109$, not $105$. So $K = 256 \cdot 109 = 27904$
(consistent with the direct evaluation), and
$P = \frac{256 \cdot 109}{2^{15}} = \frac{109}{128}$. Then $m = 109$,
$n = 128$, and $\gcd(m,n) = 1$ since $109$ is prime and $128 = 2^7$.
Therefore $m + n = 109 + 128 = 237$.\\
\texttt{</think>}

\texttt{<Answer>237</Answer>}
\end{mdframed}

The verification pass catches and corrects the arithmetic slip from
Pass~1, flipping the answer from the incorrect $233$ to the correct
$237$. The original reasoning chain was almost entirely sound,
and a single concrete checklist of the model's own intermediate claims
was enough to surface and resolve the error without any external
feedback.

\section{SWE-Bench Evaluation}
\label{app:swe-bench-eval}

To evaluate the effectiveness of InsightReplay on real-world software-engineering tasks, we conduct experiments on SWE-Bench-Verified~\citep{swebench}, a benchmark consisting of GitHub issues paired with corresponding test cases that verify whether the issue has been resolved.

\paragraph{Dataset.}
We randomly select $100$ instances from SWE-Bench-Verified. Each instance contains a natural-language bug report, a Docker image with the target repository checked out at the relevant commit under \texttt{/testbed}, and a set of test cases (both fail-to-pass and pass-to-pass) for evaluation.

\paragraph{Model and Inference Setup.}
We use \textbf{Qwen3.5-35B-A3B} as the base model, served via vLLM~\citep{kwon2023efficient} with $4$-way data parallelism across $4$ GPUs. The vLLM server is configured with a maximum model length of $262{,}143$ tokens, $0.9$ GPU memory utilization, and the Hermes tool-call parser (\texttt{--tool-call-parser hermes}) to enable native function-calling support. For generation, we set temperature to $1.0$, top-$p$ to $0.95$, and the maximum output tokens to $30{,}000$.

\paragraph{Agent Architecture.}
The model operates in an agentic loop with access to six tools: \texttt{file\_read}, \texttt{file\_edit}, \texttt{file\_write}, \texttt{grep}, \texttt{glob}, and \texttt{bash}, all scoped to the \texttt{/testbed} directory inside a Docker container. Each tool is registered as a native function schema in the chat-completion API, allowing the model to invoke tools through structured tool-call messages rather than text-based parsing. The agent iterates for up to $50$ turns per sample (\texttt{max\_turns}=50). Each turn consists of a model generation followed by tool execution. A prompt-guard mechanism monitors the estimated prompt token count and stops the agent early if it approaches the context-window limit of $40{,}960$ tokens (with a $512$-token margin). When the model determines the task is complete, it outputs the token \texttt{DONE} to terminate the loop. The final patch is extracted via \texttt{git diff} from the Docker container.

\paragraph{Sampling and Evaluation.}
For each of the $100$ instances, we independently sample $k=8$ trajectories, yielding $800$ candidate patches in total. Each candidate patch is evaluated by applying it to a fresh Docker container with the original repository state, installing necessary dependencies, and running the instance-specific test suite. A candidate is considered successful if all fail-to-pass tests now pass and all pass-to-pass tests remain passing. We report the \textbf{candidate-level pass rate}, i.e., the fraction of all $800$ candidate patches that resolve the issue.

\paragraph{Compared Methods.}
We compare three prompting strategies, all sharing the same agent architecture, tools, and inference hyperparameters:
\begin{itemize}
    \item \textbf{CoT Zero-shot}: The system prompt instructs the model to think explicitly and concretely before each action and follow a five-step workflow: (1)~understand the bug report, (2)~reproduce the bug, (3)~inspect relevant source files, (4)~analyze the root cause and apply a fix, and (5)~re-run tests to confirm resolution. No few-shot examples are provided.
    \item \textbf{CoT Few-shot}: Extends the CoT zero-shot prompt with three detailed few-shot examples demonstrating best practices: understanding the issue before patching, preferring narrow inspection and minimal edits, and always validating with the original failure mode.
    \item \textbf{InsightReplay (Ours)}: Uses the same five-step workflow but introduces a structured \texttt{<Finding>}\allowbreak\texttt{...}\allowbreak\texttt{</Finding>} mechanism. After each high-level workflow step, the model is required to output exactly one \texttt{<Finding>} block summarizing what was learned in that step. Generation stops at the closing tag, and the accumulated insights are rolled up into a summary message that replaces the growing conversation history. This compresses the context while preserving key insights and enforces a disciplined step-by-step progression. The system prompt also includes a detailed few-shot example illustrating the full five-step InsightReplay workflow.
\end{itemize}

\paragraph{Results.}
Table~\ref{tab:swe-results} reports candidate-level pass rates. InsightReplay achieves $29.75\%$ ($238/800$), outperforming both CoT Zero-shot ($23.12\%$, $185/800$) and CoT Few-shot ($20.87\%$, $167/800$). Relative to the stronger CoT baseline, this is an absolute gain of $6.63$ percentage points and a relative improvement of approximately $28.7\%$.

\begin{table}[t]
\centering
\small
\setlength{\tabcolsep}{4pt}
\caption{Candidate-level pass rates on SWE-Bench-Verified ($100$ selected instances, $k=8$ samples per instance, \texttt{Qwen3.5-35B-A3B}).}
\label{tab:swe-results}
\begin{tabular}{>{\centering\arraybackslash}p{0.24\linewidth}ccc}
\toprule
\textbf{Benchmark} & \textbf{CoT Zero-shot} & \textbf{CoT Few-shot} & \textbf{InsightReplay (Ours)} \\
\midrule
SWE-Bench-Verified & $23.12\%$ ($185/800$) & $20.87\%$ ($167/800$) & $\mathbf{29.75\%}$ ($238/800$) \\
\bottomrule
\end{tabular}
\end{table}

\section{Length-Sweep Protocol Details}
\label{app:length-sweep-protocol}

To jointly characterize the relationship between compute budget and accuracy for the length sweep in Section~\ref{sec:inverted-u-validation}, we apply the following bin-based visualization protocol per (model, dataset, method) triple.

\paragraph{Per-problem binning.}
For each problem, we independently sample $k=16$ baseline completions (temperature $1.0$, top-$p$ $0.95$), producing $16$ thinking trajectories of varying length. We sort these $16$ samples by completion tokens (ascending) and bin them uniformly into $8$ length levels ($2$ samples per level). Crucially, this binning is done \emph{per problem}, not globally across all problems: since problems differ substantially in intrinsic difficulty, a global token-based binning would conflate length with difficulty (a ``short'' bin would be dominated by easy problems, while a ``long'' bin would be dominated by hard problems). Per-problem binning instead identifies, for each problem, its $8$ relative length levels (very-short, short, medium, $\ldots$, very-long), so that within each level we compare samples that come from the \emph{same set of problems}---isolating the length effect from the difficulty effect. Figure~\ref{fig:per-problem-binning} illustrates this procedure schematically.

\begin{figure}[ht]
\centering
\includegraphics[width=0.6\textwidth]{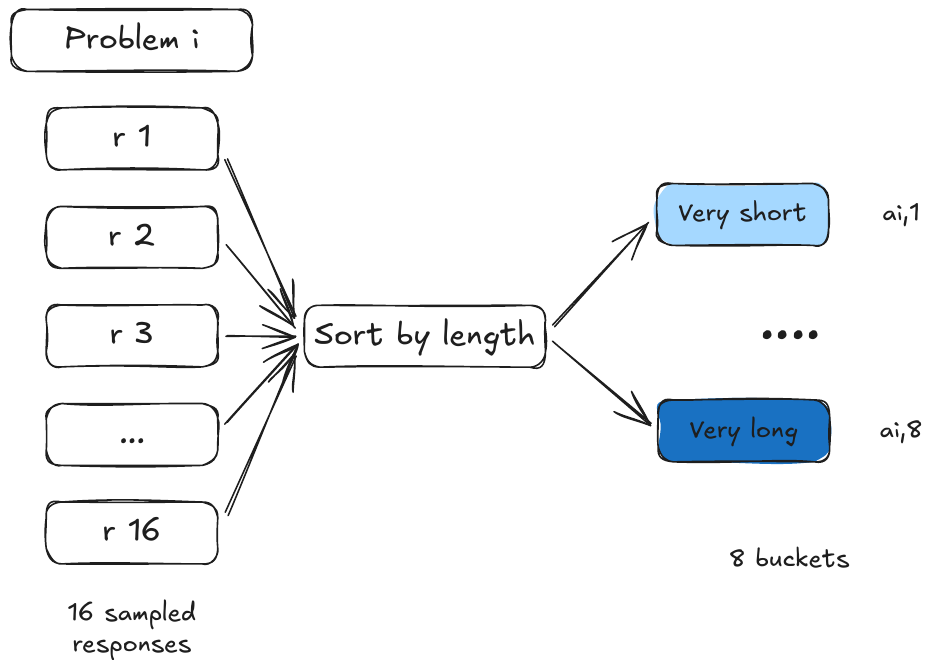}
\caption{%
Per-problem binning procedure. For each problem $i$, we draw $k=16$ independent baseline samples ($r_1, \ldots, r_{16}$), sort them by completion length, and uniformly partition them into $8$ length buckets ($2$ samples per bucket), denoted $a_{i,1}$ (very short) through $a_{i,8}$ (very long). Aggregation across problems then averages within each bucket index, producing the $8$ data points per (model, method) curve in Figure~\ref{fig:length-sweep-30b}. Critically, the binning is done \emph{per problem}: each $a_{i,k}$ refers to problem $i$'s own $k$-th length level relative to its own sampling distribution, not to a global token threshold. This controls for variation in problem difficulty.
}
\label{fig:per-problem-binning}
\end{figure}

\paragraph{Sample-level pairing for VO/IR.}
The corresponding $\mathrm{VO}$ / $\mathrm{IR1}$ / $\mathrm{IR3}$ samples \emph{strictly inherit} the baseline's token ordering: each VO/IR/IR3 sample is paired to its source baseline sample (sorted by that baseline's token count). This pairing ensures that within any given length level, all four methods compare samples extended from \emph{the same set of baseline starting points}, isolating the method's contribution from sampling variance.

\paragraph{Per-level statistics.}
Within each length level, we record two statistics:
\begin{itemize}[leftmargin=*]
\item $x$ = the mean of the $2$ samples' \emph{actual} completion tokens (baseline curve = baseline's own tokens; the other three methods = baseline tokens + injected insights + extended generation tokens);
\item $y$ = the level's accuracy (mean of $\{0,1\}$ correctness).
\end{itemize}

\paragraph{Cross-benchmark aggregation.}
We first compute the $8$ levels' $(x, y)$ values per problem, then average over all problems to obtain the $8$ data points for each (model, dataset, method) triple. For the cross-benchmark merged curves shown in Figures~\ref{fig:length-sweep-30b} and~\ref{fig:length-sweep-8b}, we equally weight the $8$ data points across AIME / GPQA / LiveCodeBench / HMMT (each benchmark contributing $1/4$), to avoid the largest dataset (GPQA, $n = 3168$) dominating the merged curve.

% \paragraph{Peak estimation.}
% Each panel's $4$ dashed colored lines correspond to each method's mean accuracy over the top-$3$ length levels by accuracy---a robust ``peak performance'' estimate (lower variance than the single-point maximum), used to compare ceiling performance across methods.

\paragraph{Results on $\sim8$B-scale models.}
Figure~\ref{fig:length-sweep-8b} reports the length-sweep curves for the three $\sim 8$B-scale models (\mbox{\texttt{Qwen3.5-9B}}, \mbox{\texttt{Gemma-4-E4B}}, and \mbox{\texttt{R1-Distill-Qwen-7B}}), applying the same protocol as in Section~\ref{sec:inverted-u-validation}. The same two patterns observed at the $30$B scale---rightward shift of the optimal CoT length and upward shift of the peak accuracy---hold here as well, indicating that the inverted-U reshaping predicted by Theorems~\ref{thm:ir_shift} and~\ref{thm:ir_peak} is robust across model scales.

\begin{figure}[ht]
\centering
\includegraphics[width=\textwidth]{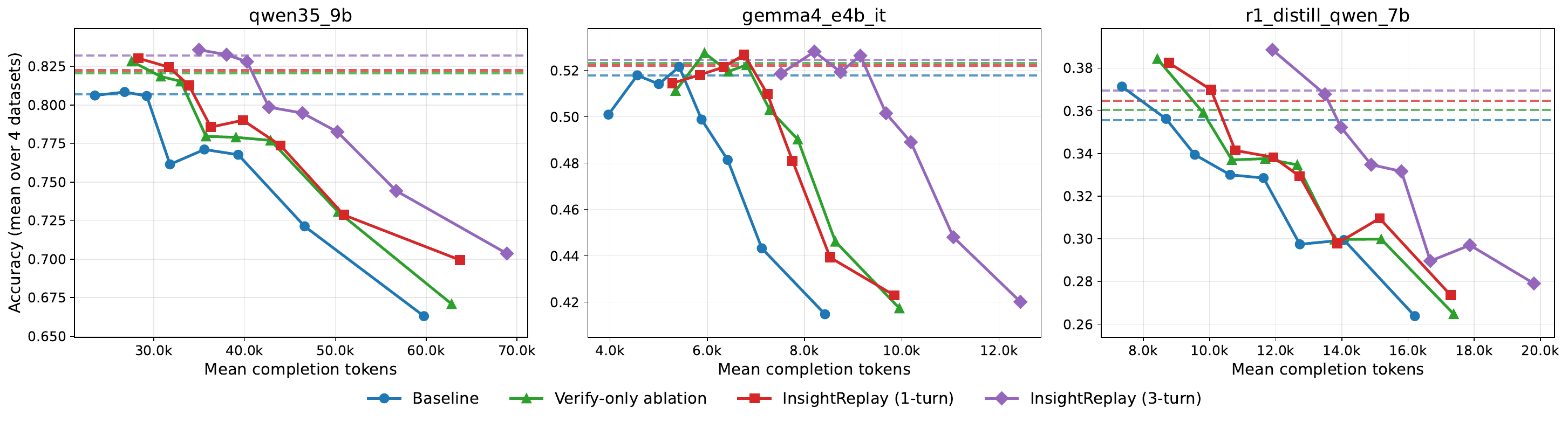}
\caption{%
Length-sweep results for $\sim8$B-scale models (\mbox{\texttt{Qwen3.5-9B}}, \mbox{\texttt{Gemma-4-E4B}}, and \mbox{\texttt{R1-Distill-Qwen-7B}}), averaged with equal weight over the four benchmarks. Dashed lines mark each variant's peak (mean of its top-$3$ length levels). The same rightward and upward shift of the IR3 curve relative to Base observed at the $30$B scale (Figure~\ref{fig:length-sweep-30b}) holds at the $8$B scale, confirming that the inverted-U reshaping is consistent across model sizes.
}
\label{fig:length-sweep-8b}
\end{figure}

\section{Post-Training Details}
\label{app:training-details}

All experiments are implemented on top of the open-source
\textbf{verl} RL training framework~\citep{sheng2024hybridflow}, using
its FSDP2 actor backend and vLLM rollout engine.

This appendix lists the full set of hyperparameters used for the
\textsc{InsightReplay} RL study reported in the main text. Both the
baseline (standard CoT rollout) and the \textsc{InsightReplay} run share
\emph{identical} settings: same base model, same dataset, same optimizer,
same KL/clip configuration, same rollout sampler, same length budget,
and the same compute topology. The two runs differ in exactly one
configuration line — the \textsc{InsightReplay} run additionally sets
\texttt{actor\_rollout\_ref.\allowbreak rollout.\allowbreak agent.\allowbreak default\_agent\_loop=\allowbreak insight\_replay\_agent}, which activates the rollout-time
continuation-cue injection described in Section~\ref{sec:training}; all
remaining values below are shared.

\paragraph{Compute and parallelism.}
\begin{itemize}
\itemsep0pt
  \item Hardware: $16$ nodes $\times$ $8$ GPUs (NVIDIA H800) $=128$ GPUs total.
  \item Training framework: \texttt{verl} with FSDP2; rollout engine: vLLM.
  \item Ulysses sequence-parallel size: $2$; FSDP shard size: $8$;
        rollout tensor-parallel size: $1$ (one vLLM instance per GPU,
        $128$ instances total).
  \item Dynamic batching enabled for both actor update and inference;
        per-GPU token cap $=1\!\times\!(L_{\text{prompt}}+L_{\text{resp}})$
        for the actor and $2\!\times\!(L_{\text{prompt}}+L_{\text{resp}})$
        for inference / log-prob.
  \item No parameter, optimizer, or policy offloading; reshard after
        forward; entropy checkpointing enabled.
  \item {\raggedright vLLM rollout uses chunked prefill with
        \texttt{max\_num\_batched\_tokens}=$32{,}768$,
        \texttt{gpu\_memory\_utilization}=$0.7$,
        prefix caching disabled, and
        \texttt{free\_cache\_engine}=\texttt{True}.\par}
\end{itemize}

\paragraph{Model and data.}
\begin{itemize}
\itemsep0pt
  \item Base model: \texttt{Qwen3-4B-Base} (no SFT warm-up).
  \item Training set: \texttt{DAPO-Math-15K} (RL split).
  \item Validation set: AIME~2025.
  \item Maximum prompt length: $2{,}048$ tokens.
  \item Maximum response length: $30{,}720$ tokens (identical for both
        runs; under \textsc{InsightReplay} the cue and the second-phase
        continuation share this same budget).
  \item Left truncation; data shuffling enabled.
\end{itemize}

\paragraph{Optimization.}
\begin{itemize}
\itemsep0pt
  \item Algorithm: GRPO (\texttt{adv\_estimator=grpo}).
  \item Optimizer: AdamW; learning rate $1\!\times\!10^{-6}$;
        warmup steps $=10$; weight decay $=0.1$; gradient clipping $=1.0$.
  \item Loss aggregation: \texttt{token-mean}.
  \item Entropy coefficient: $0$; entropy computed from logits with
        chunking.
  \item KL regularization \emph{fully disabled}: no KL term in the
        reward (\texttt{use\_kl\_in\_reward=False}, $\beta_{\text{KL}}=0$),
        no KL loss against the reference policy
        (\texttt{use\_kl\_loss=False}, coefficient $=0$).
  \item PPO-style clipping with DAPO's clip-higher asymmetry:
        $\varepsilon_{\text{low}}=0.2$, $\varepsilon_{\text{high}}=0.28$.
  \item No critic warm-up.
\end{itemize}

\paragraph{Rollout protocol.}
\begin{itemize}
\itemsep0pt
  \item Group size per prompt: $n=8$ trajectories.
  \item Global batch: $256$ prompts per training step
        ($\Rightarrow$ $256\times 8=2{,}048$ trajectories per step).
  \item PPO mini-batch: $128$ prompts.
  \item Training sampler: temperature $=1.0$, top-$p=1.0$,
        top-$k=-1$ (disabled);\allowbreak\
        \texttt{ignore\_eos=False}.
  \item Validation sampler: temperature $=1.0$, top-$p=0.7$,
        $n=32$ samples per problem (used for $\mathrm{best@32}$,
        $\mathrm{maj@32}$, $\mathrm{mean@32}$).
\end{itemize}

\paragraph{Reward shaping.}
We use the DAPO reward manager with the soft overlong penalty enabled
to stabilize long-CoT training:
\begin{itemize}
\itemsep0pt
  \item \texttt{overlong\_buffer.enable=True}.
  \item Buffer length: $4{,}096$ tokens (i.e., the soft penalty engages
        once a response is within $4{,}096$ tokens of the
        $30{,}720$-token cap).
  \item Penalty factor: $1.0$.
  \item \texttt{max\_resp\_len} passed to the reward equals the rollout
        \texttt{max\_response\_length} ($30{,}720$).
\end{itemize}

\paragraph{Schedule and checkpointing.}
\begin{itemize}
\itemsep0pt
  \item Total training epochs: $10$ (both runs use the same seed,
        the same data shuffling, and the same number of update steps).
  \item Validation frequency: every $20$ training steps;
        validation is also run before training starts.
  \item Checkpoint frequency: every $50$ training steps;
        checkpoints save \{model, optimizer, extras\}.
  \item \texttt{balance\_batch=False}; \texttt{resume\_mode=auto}.
  \item Logging: TensorBoard.
\end{itemize}

\paragraph{Rollout continuation cue.}
\label{app:rollout-cue}
For the \textsc{InsightReplay} rollout, once the policy emits the end-of-sequence token, we append the following fixed continuation cue before allowing generation to continue under the same response-length budget:

\begin{mdframed}[backgroundcolor=gray!5, linecolor=gray!50!black, nobreak=true]

{\itshape
Wait, before I commit to a final answer, let me restate what's being asked and cross-verify by a completely different method than what I used above.\par
The user's request: \texttt{\{question\}}\par
Key conclusions so far:
}
\end{mdframed}

This cue is inserted with \(\mathrm{response\_mask}{=}0\), so the injected tokens affect the model only as conditioning context and never contribute directly to the policy loss.

\paragraph{InsightReplay-specific configuration.}
The only configuration difference between the two runs is the rollout
agent loop:
\begin{itemize}
\itemsep0pt
  \item Baseline:
        \texttt{default\_agent\_loop} unset (single-pass CoT rollout).
  \item \textsc{InsightReplay}:
        \texttt{actor\_rollout\_ref.rollout.agent.default\_agent\_loop}\\
        \texttt{=insight\_replay\_agent}.
        Whenever phase~1 terminates by emitting EOS, the rollout worker
        splices in the fixed continuation cue (Section~\ref{sec:training})
        and resumes generation; the total response is still capped at
        $30{,}720$ tokens, so the per-trajectory length budget is
        identical to the baseline.
\end{itemize}

A consolidated summary of all hyperparameters appears in
Table~\ref{tab:rl-hparams}.

\begin{table}[ht]
\centering
\small
\caption{Full RL training hyperparameters.
Values are shared by the baseline and \textsc{InsightReplay} runs unless
otherwise noted.}
\label{tab:rl-hparams}
\makebox[\textwidth][c]{%
\begin{tabular}{cc}
\toprule
\textbf{Hyperparameter} & \textbf{Value} \\
\midrule
\multicolumn{2}{c}{\emph{Model \& data}} \\
Base model              & Qwen3-4B-Base (no SFT) \\
Training set            & DAPO-Math-15K \\
Validation set          & AIME~2025 \\
Max prompt length       & $2{,}048$ \\
Max response length     & $30{,}720$ \\
\midrule
\multicolumn{2}{c}{\emph{Algorithm}} \\
RL algorithm            & GRPO \\
Reward manager          & DAPO (soft overlong) \\
KL in reward / KL loss  & disabled / disabled \\
Clip ratio (low, high)  & $(0.2,\ 0.28)$ \\
Entropy coefficient     & $0$ \\
Loss aggregation        & token-mean \\
Overlong buffer length  & $4{,}096$ \\
Overlong penalty factor & $1.0$ \\
\midrule
\multicolumn{2}{c}{\emph{Optimization}} \\
Optimizer               & AdamW \\
Learning rate           & $1\!\times\!10^{-6}$ \\
LR warmup steps         & $10$ \\
Weight decay            & $0.1$ \\
Gradient clipping       & $1.0$ \\
\midrule
\multicolumn{2}{c}{\emph{Batching}} \\
Global batch (prompts)  & $256$ \\
Rollouts per prompt     & $8$ \\
PPO mini-batch (prompts)& $128$ \\
Dynamic batching        & enabled \\
\midrule
\multicolumn{2}{c}{\emph{Rollout (training)}} \\
Sampler                 & temperature $1.0$, top-$p$ $1.0$, top-$k$ off \\
Engine                  & vLLM, chunked prefill, no prefix cache \\
GPU memory utilization  & $0.7$ \\
\midrule
\multicolumn{2}{c}{\emph{Rollout (validation)}} \\
Sampler                 & temperature $1.0$, top-$p$ $0.7$ \\
Samples per problem     & $n=32$ \\
\midrule
\multicolumn{2}{c}{\emph{Schedule}} \\
Total epochs            & $10$ \\
Validation frequency    & every $20$ steps \\
Checkpoint frequency    & every $50$ steps \\
\midrule
\multicolumn{2}{c}{\emph{Compute}} \\
Hardware                & $128\times$ NVIDIA H800 ($16$ nodes) \\
Framework               & verl + FSDP2; vLLM rollout \\
Ulysses SP size         & $2$ \\
FSDP shard size         & $8$ \\
Rollout TP size         & $1$ \\
\midrule
\multicolumn{2}{c}{\emph{Run-specific (only difference)}} \\
\texttt{default\_agent\_loop} & unset (baseline)\,/\,
                                 InsightReplay \\
\bottomrule
\end{tabular}
}
\end{table}

\paragraph{Training-curve smoothing for visualization.}
\label{app:training-curve-fit}
In Figure~\ref{fig:training-compare-best-mean}, for each metric we additionally
overlay a smooth trend line obtained by fitting a degree-$3$ polynomial
\(\hat{f}(t) = \sum_{k=0}^{3} a_k\, t^k\) to the per-step validation values
\((t_i, y_i)\) via ordinary least squares:
\[
  (a_0, a_1, a_2, a_3) \,=\, \arg\min_{a}\;
  \sum_i \bigl(y_i - \hat{f}(t_i)\bigr)^2.
\]
We then evaluate \(\hat{f}\) on $200$ equally spaced points spanning the
observed training-step range and render the resulting curve as a dashed
line. This fit is used purely as a visual smoother: it suppresses
per-evaluation noise so that the relative trend between the two runs is
easier to read at a glance. We do not draw any quantitative conclusion
from the polynomial coefficients themselves.

\section{Alternative Training Method}
\label{app:alternative-training-design}

This appendix documents an earlier attempt to train the multi-round
InsightReplay structure directly, rather than inducing it at
rollout time with a fixed continuation cue. The pipeline has two stages:
we first construct supervised multi-round traces, and then optimize the
policy with a training procedure that explicitly mirrors the round-wise
visibility pattern used at inference time.

\subsection{Method Overview}

\paragraph{Base model and goal.}
We start from \texttt{Qwen3.5-9B-Base}~\citep{qwen35}. The goal of this
alternative design is to make the multi-round reasoning structure
explicit during training: each round exposes only the question, the
currently visible reasoning context, and the replayed intermediate
insights that should be accessible at that round.

\paragraph{Stage 1: Supervised multi-round trace construction.}
To obtain SFT data in the target format, we sample $2{,}000$ math
problems from \texttt{DAPO-Math-17K}~\citep{dapo}. For each problem, a strong
teacher model, \texttt{Qwen3.5-397B-A17B}, generates a full solution,
which we then decompose into a sequence of reasoning steps and compact
intermediate insights. We convert these into multi-round targets where
all non-final rounds contain a reasoning segment together with an
intermediate insight, and the last round produces the final answer. This
procedure yields $1{,}892$ valid problems and $8{,}102$ SFT training
entries. A concrete example appears in Appendix~\ref{appendix:insight_example}.

\paragraph{Stage 2: Direct multi-round RL training.}
Our RL training procedure consists of two steps, illustrated in
Figure~\ref{fig:2d-mask-training}.

In \textbf{Step~1}, we start from the standard causal mask and add a
round-level visibility mask over the concatenated multi-round response.
This makes the intended computation explicit: at each round, the model
can attend only to the original question, the relevant reasoning tokens,
and the replayed insights that should be visible in that round, while
all other tokens are masked out on top of the causal constraint.
However, this alone creates a RoPE mismatch: masked tokens become
invisible under attention, but they still occupy positions during
training, so the relative distance between two visible tokens no longer
matches the rollout setting in which those masked tokens are physically
absent.

In \textbf{Step~2}, we regroup each training case by round. For every
round, we delete all masked-out tokens, keep only the tokens visible in
that round, and then bundle the resulting per-round traces from the same
case for joint optimization. This yields contiguous
\texttt{position\_ids} that match rollout exactly. For the advantage,
standard GRPO uses a sample-wise normalized quantity within a prompt
group, e.g.,
\[
A_i^{\mathrm{GRPO}} =
\frac{R_i - \mathrm{mean}(\{R_j\})}{\mathrm{std}(\{R_j\})}.
\]
In our alternative design, we instead assign the same case-level reward
to every round from the same case and use \(A_r = R - b\) for all
rounds \(r\), reusing the same shared advantage across the bundled
round-level traces.

This construction is methodologically faithful to the intended
multi-round computation, but in practice it is substantially more
expensive than the rollout-time cue-injection protocol used in the main
paper and did not produce comparable gains on AIME~2025.

\begin{figure*}[t]
\centering
\resizebox{\textwidth}{!}{%
\begin{tikzpicture}[>=Latex, scale=0.65, transform shape]

\tikzset{
  cell/.style={draw, line width=0.7pt, minimum size=4.5mm, inner sep=0pt, fill=white},
  on/.style={cell, fill=teal!65},
  tok/.style={draw, line width=0.6pt, minimum height=8mm, minimum width=12mm,
              inner sep=0pt, font=\footnotesize, anchor=center},
  qtok/.style={tok, fill=blue!12},
  ftok/.style={tok, fill=orange!22},
  ttok/.style={tok, fill=teal!18},
}

\matrix (M) [matrix of nodes,
  nodes={cell},
  row sep=-\pgflinewidth, column sep=-\pgflinewidth,
  nodes in empty cells,
  ampersand replacement=\&,
  inner sep=0pt
]{
  \& \& \& \& \& \& \& \& \& \\
  \& \& \& \& \& \& \& \& \& \\
  \& \& \& \& \& \& \& \& \& \\
  \& \& \& \& \& \& \& \& \& \\
  \& \& \& \& \& \& \& \& \& \\
  \& \& \& \& \& \& \& \& \& \\
  \& \& \& \& \& \& \& \& \& \\
  \& \& \& \& \& \& \& \& \& \\
  \& \& \& \& \& \& \& \& \& \\
  \& \& \& \& \& \& \& \& \& \\
};

\foreach \i [count=\k from 1] in {Ques,$F_1$,$T_1$,$T_2$,$F_1$,$F_2$,$T_3$,$F_1$,$F_2$,$F_3$} {
  \node[font=\small, rotate=90, anchor=south] at ($(M-1-\k.north)+(0.18,0.5)$) {\i};
}
\foreach \i [count=\k from 1] in {Ques,$F_1$,$T_1$,$T_2$,$F_1$,$F_2$,$T_3$,$F_1$,$F_2$,$F_3$} {
  \node[font=\small, anchor=east] at ($(M-\k-1.west)+(-0.2,0)$) {\i};
}

\node[on] at (M-1-1) {};
\node[on] at (M-2-1) {}; \node[on] at (M-2-2) {};
\foreach \c in {1,2,3} { \node[on] at (M-3-\c) {}; }
\foreach \c in {1,2,3,4} { \node[on] at (M-4-\c) {}; }
\node[on] at (M-5-5) {};
\foreach \c in {1,2,3,4,6} { \node[on] at (M-6-\c) {}; }
\foreach \c in {1,3,4,5,6,7} { \node[on] at (M-7-\c) {}; }
\node[on] at (M-8-8) {};
\node[on] at (M-9-9) {};
\foreach \c in {1,3,4,5,6,7,10} { \node[on] at (M-10-\c) {}; }

\foreach \k in {1,...,10} {
  \pgfmathtruncatemacro{\pid}{\k-1}
  \node[font=\Large, text=red!70!black] at ($(M-10-\k.south)+(0,-0.45)$) {$\pid$};
}
\node[font=\Large, text=red!70!black, anchor=east] at ($(M-10-1.south)+(-0.35,-0.52)$) {\texttt{pos}:};

\node[font=\Large, anchor=north, align=center] (step1title) at ($(M.south)+(0,-1.2)$)
  {\textbf{Step 1}: 2D Attention Mask};

\draw[->, line width=2pt, teal!50!black]
  ($(M.east)+(0.3,0)$) -- ++(4.0,0)
  node[midway, above=6pt, font=\large, text=black, align=center] {Remove masked tokens}
  node[midway, below=6pt, font=\large, text=black, align=center] {per round \& bundle};

\coordinate (RB) at ($(M.east)+(6.8, 0)$);

\node[font=\large\bfseries, anchor=east] at ($(RB)+(-0.4, 3.2)$) {Round 1:};

\node[qtok] (r1c1) at ($(RB)+(0.6, 3.2)$) {Ques};
\node[ftok, right=-\pgflinewidth of r1c1] (r1c2) {$F_1$};
\node[ttok, right=-\pgflinewidth of r1c2] (r1c3) {$T_1$};
\node[ttok, right=-\pgflinewidth of r1c3] (r1c4) {$T_2$};

\foreach \n/\p in {r1c1/0, r1c2/1, r1c3/2, r1c4/3} {
  \node[font=\Large, text=red!70!black] at ($(\n.south)+(0,-0.35)$) {$\p$};
}
\node[font=\Large, text=red!70!black, anchor=east] at ($(r1c1.south)+(-0.2,-0.42)$) {\texttt{pos}:};

\node[font=\large\bfseries, anchor=east] at ($(RB)+(-0.4, 0.7)$) {Round 2:};

\node[qtok] (r2c1) at ($(RB)+(0.6, 0.7)$) {Ques};
\node[ttok, right=-\pgflinewidth of r2c1] (r2c2) {$T_1$};
\node[ttok, right=-\pgflinewidth of r2c2] (r2c3) {$T_2$};
\node[ftok, right=-\pgflinewidth of r2c3] (r2c4) {$F_1$};
\node[ftok, right=-\pgflinewidth of r2c4] (r2c5) {$F_2$};
\node[ttok, right=-\pgflinewidth of r2c5] (r2c6) {$T_3$};

\foreach \n/\p in {r2c1/0, r2c2/1, r2c3/2, r2c4/3, r2c5/4, r2c6/5} {
  \node[font=\Large, text=red!70!black] at ($(\n.south)+(0,-0.35)$) {$\p$};
}
\node[font=\Large, text=red!70!black, anchor=east] at ($(r2c1.south)+(-0.2,-0.42)$) {\texttt{pos}:};

\node[font=\large\bfseries, anchor=east] at ($(RB)+(-0.4, -1.8)$) {Round 3:};

\node[qtok] (r3c1) at ($(RB)+(0.6, -1.8)$) {Ques};
\node[ttok, right=-\pgflinewidth of r3c1] (r3c2) {$T_1$};
\node[ttok, right=-\pgflinewidth of r3c2] (r3c3) {$T_2$};
\node[ftok, right=-\pgflinewidth of r3c3] (r3c4) {$F_1$};
\node[ftok, right=-\pgflinewidth of r3c4] (r3c5) {$F_2$};
\node[ttok, right=-\pgflinewidth of r3c5] (r3c6) {$T_3$};
\node[ftok, right=-\pgflinewidth of r3c6] (r3c7) {$F_3$};

\foreach \n/\p in {r3c1/0, r3c2/1, r3c3/2, r3c4/3, r3c5/4, r3c6/5, r3c7/6} {
  \node[font=\Large, text=red!70!black] at ($(\n.south)+(0,-0.35)$) {$\p$};
}
\node[font=\Large, text=red!70!black, anchor=east] at ($(r3c1.south)+(-0.2,-0.42)$) {\texttt{pos}:};

\coordinate (brace-x) at ([xshift=8mm]r3c7.east);
\draw[decorate, decoration={brace, amplitude=10pt}, line width=1pt]
  (brace-x |- r1c4.north) ++(0,4mm) -- (brace-x |- r3c7.south) ++(0,-6mm)
  node[midway, right=14pt, align=center, font=\large]
    {Bundled\\Training\\(shared $R$)};

\node[font=\large, text=gray!70!black, align=center, anchor=north]
  at ($(r3c4.south)+(0,-0.8)$)
  {Standard causal attention, contiguous \texttt{position\_ids}};

\node[font=\Large, anchor=north, align=center] at (step1title.north -| r3c4)
  {\textbf{Step 2}: Per-Round Traces};

\coordinate (leg) at ($(r3c2.south)+(-1.5,-1.7)$);
\node[font=\Large\bfseries, anchor=south west] at ($(leg)+(-1.0,0.75)$) {};
\node[qtok, minimum width=6mm, minimum height=4mm] (lq) at (leg) {};
\node[font=\Large, right=2pt of lq] (lqt) {Question};
\node[ftok, minimum width=6mm, minimum height=4mm, right=12pt of lqt] (lf) {};
\node[font=\Large, right=2pt of lf] (lft) {Insight};
\node[ttok, minimum width=6mm, minimum height=4mm, right=12pt of lft] (lt) {};
\node[font=\Large, right=2pt of lt] {Thinking};

\end{tikzpicture}%
}
\caption{Two-step training construction for the alternative multi-round method. Step~1 applies a round-specific visibility mask over the concatenated trajectory; Step~2 removes masked tokens round by round and bundles the resulting traces so that training-time attention and position indices match rollout.}
\label{fig:2d-mask-training}
\end{figure*}

\subsection{SFT Data Construction Details}
\paragraph{Source Data.}
We sample $2{,}000$ math problems from DAPO-Math-17K~\citep{dapo}, a
large-scale mathematical reasoning dataset.

\paragraph{Solution Generation.}
For each sampled problem, we use \texttt{Qwen3.5-397B-A17B} to generate
a complete solution in non-think mode, with temperature $\tau=0.7$ and
top-$p=0.95$.

\paragraph{Post-hoc insight Construction.}
Each generated solution is then decomposed into a multi-round insight
format as follows:
\begin{enumerate}
    \item The full solution is split into logical reasoning steps (up to $8$ steps).
    \item Each reasoning step is summarized into a concise insight (1--2 sentences, up to $256$ tokens) via a separate model call.
    \item Multi-turn SFT entries are constructed so that each round contains a reasoning step in \texttt{<think>}\allowbreak\texttt{...}\allowbreak\texttt{</think>} followed by an intermediate conclusion in \texttt{<finding>}\allowbreak\texttt{...}\allowbreak\texttt{</finding>} tags, with the final round producing the answer.
\end{enumerate}
This process yields $1{,}892$ valid cases and $8{,}102$ SFT training
entries. The average number of insight rounds per problem is $3.3$,
with a maximum of $9$ rounds.

\subsection{insight Format: Detailed Example}
\label{appendix:insight_example}

This subsection illustrates the multi-round insight format with a
concrete example from our RL training rollout. The model solves the
problem in $3$ rounds: two insight rounds followed by one answer round.
At each round, the model receives all accumulated context (previous
reasoning and insights) and generates the next step.

\paragraph{Problem.}

\begin{mdframed}[backgroundcolor=gray!10]
Define a sequence $\{a_n\}_{n\geq 1}$ as follows:
\begin{itemize}
    \item For $n \leq 2$: $a_n = n - 1$
    \item For all $n \geq 2$: $a_n$ is the remainder of $a_{n-1} + a_{n-2}$ divided by $3$
\end{itemize}
Compute $\displaystyle\sum_{i=2018}^{2025} a_i$.
\end{mdframed}

\subsubsection*{Round 1: Insight Round}

\paragraph{Input.} The model receives the original problem as the user prompt.

\paragraph{Model Output.}

\begin{mdframed}[backgroundcolor=blue!5]
\texttt{<think>}\\
First, compute the initial terms of the sequence:
\begin{align*}
a_1 &= 0,\quad a_2 = 1,\quad a_3 = (1+0) \bmod 3 = 1, \\
a_4 &= (1+1) \bmod 3 = 2,\quad a_5 = (2+1) \bmod 3 = 0, \\
a_6 &= (0+2) \bmod 3 = 2,\quad a_7 = (2+0) \bmod 3 = 2, \\
a_8 &= (2+2) \bmod 3 = 1,\quad a_9 = (1+2) \bmod 3 = 0, \\
a_{10} &= (0+1) \bmod 3 = 1.
\end{align*}
\texttt{</think>}

\texttt{<finding>}The sequence $\{a_n\}$ exhibits periodic behavior starting from $n=3$, with values $1, 2, 0, 2, 2, 1$ repeating cyclically. The period is 6.\texttt{</finding>}
\end{mdframed}

The model computes the first $10$ terms and identifies periodicity.
Note that the insight contains an \emph{incorrect} period estimate ($6$
instead of $8$)---the model will self-correct in the next round.

\subsubsection*{Round 2: Insight Round}

\paragraph{Input.} The model receives the original problem concatenated with the full Round~1 reasoning (\texttt{<think>} content) and the Round~1 insight as accumulated context.

\paragraph{Model Output.}

\begin{mdframed}[backgroundcolor=blue!5]
\texttt{<think>}\\
Examining the first $10$ terms: $a_1=0, a_2=1, a_3=1, a_4=2, a_5=0, a_6=2, a_7=2, a_8=1, a_9=0, a_{10}=1$.

Comparing $(a_1, a_2)$ and $(a_9, a_{10})$: both are $(0, 1)$. Since this is a linear recurrence determined by consecutive pairs, the sequence repeats from this point. The period length is $T = 9 - 1 = 8$, and the terms within one period are: $0, 1, 1, 2, 0, 2, 2, 1$.

The sum over one period: $S_{\text{cycle}} = 0 + 1 + 1 + 2 + 0 + 2 + 2 + 1 = 9$.

We need $\sum_{i=2018}^{2025} a_i$. First, determine $a_{2018}$'s position in the period: $2018 \bmod 8$. The mapping $n \bmod 8$: $1 \to a_1(0),\ 2 \to a_2(1),\ \ldots,\ 0 \to a_8(1)$.
\texttt{</think>}

\texttt{<finding>}The sequence is periodic with period 8, repeating $0, 1, 1, 2, 0, 2, 2, 1$. The sum of one period is 9.\texttt{</finding>}
\end{mdframed}

Building on the first round's computation, the model re-examines the
sequence, corrects the period to $8$, computes the period sum, and sets
up the index mapping for the final calculation.

\subsubsection*{Round 3: Answer Round}

\paragraph{Input.} The model receives the original problem, the full reasoning from Rounds~1 and~2, and all accumulated insights.

\paragraph{Model Output.}

\begin{mdframed}[backgroundcolor=green!5]
\texttt{<think>}\\
Compute $2018 \pmod{8}$: $2018 = 8 \times 252 + 2$, remainder is 2.

So $a_{2018}$ corresponds to the 2nd term in the period.

The terms $a_{2018}, \ldots, a_{2025}$ map to positions $\{2,3,4,5,6,7,8,1\}$, which is exactly a permutation of $\{1,2,3,4,5,6,7,8\}$.

Therefore, their sum equals $S_{\text{cycle}} = 9$.

Answer: 9
\texttt{</think>}

The sequence $\{a_n\}$ is periodic with period $T = 8$ and per-period sum $S = 9$. Since $2018 \equiv 2 \pmod{8}$, the 8 terms $a_{2018}$ through $a_{2025}$ span exactly one complete period, so their sum equals $9$.

Answer: $\boxed{9}$
\end{mdframed}

\subsection{SFT Hyperparameters}
We fine-tune for one epoch with the following setup:
\begin{itemize}
    \item Learning rate: $1 \times 10^{-5}$
    \item Maximum sequence length: $40{,}000$ tokens
    \item Global batch size: $128$
    \item Micro batch size per GPU: $4$
    \item Hardware: $8$ GPUs with FSDP
    \item Gradient checkpointing enabled
    \item No padding mode with right truncation
\end{itemize}

\subsection{RL Setup}
We apply GRPO~\citep{grpo} in two settings: a insight-enabled rollout
and a insight-disabled baseline. The RL training set is DAPO-Math-15K,
a $15{,}917$-problem subset of DAPO-Math-17K. Evaluation is conducted
on AIME 2025~\citep{maa_aime}, with $8$ rollouts per problem and mean@1
accuracy as the validation metric.

\paragraph{Reward Function.}
We use the DAPO reward manager with the same soft overlong penalty as in
the main post-training experiment. The model's answer is extracted from
the last \texttt{Answer:} line, normalized, and compared against the
ground truth, yielding a correctness reward of $+1.0$ for correct
answers and $-1.0$ for incorrect ones. In addition, we enable the soft
overlong penalty to discourage trajectories that approach the maximum
response budget: \texttt{overlong\_buffer.enable=True}, the penalty
factor is $1.0$, and \texttt{max\_resp\_len} passed to the reward
matches the rollout \texttt{max\_response\_length}.

\paragraph{RL Hyperparameters.}
Both experiments use the following shared configuration on $8$
NVIDIA H200 GPUs:
\begin{itemize}
    \item Algorithm: GRPO with token-level loss aggregation
    \item Sequence parallelism (Ulysses SP): $8$
    \item Rollout tensor parallelism: $2$
    \item Global training batch size: $64$ prompts
    \item Rollout $n$: $8$ responses per prompt ($512$ total samples per step)
    \item PPO mini-batch size: $32$
    \item Learning rate: $1 \times 10^{-6}$ with $10$ warmup steps
    \item KL loss coefficient: $0.001$
    \item Maximum prompt length: $27{,}000$ tokens
    \item Maximum response length: $25{,}000$ tokens
    \item Total training: $5$ epoch
    \item Validation frequency: every $20$ steps
\end{itemize}

\paragraph{Insight-Mode Configuration.}
In the insight-enabled experiment, rollouts use multi-round generation
with up to $5$ insight rounds per problem. At each round, the model
produces a reasoning step and an intermediate conclusion; all previous
reasoning steps and insights remain in context for the next round. The
rollout continues until either the maximum number of turns is reached or
the model fails to emit a well-formed
\texttt{<finding>}\allowbreak\texttt{...}\allowbreak\texttt{</finding>}
tag.

\paragraph{Baseline Configuration.}
In the insight-disabled experiment, the model generates a single
continuous response per problem without any multi-round structure.

\subsection{Controlled Variables}
To ensure a fair comparison, both experiments:
\begin{itemize}
    \item use identical RL hyperparameters;
    \item are evaluated on the same AIME 2025 benchmark with the same $8$-rollout mean@1 protocol; and
    \item differ only in whether the multi-round insight rollout is enabled.
\end{itemize}

\subsection{Experiment Results}
Figure~\ref{fig:alternative-training-results} summarizes the validation
performance of the alternative training method over a unified training
budget of $420$ steps. As in the main post-training comparison, we track
\(\mathrm{best@32}\), \(\mathrm{maj@32}\), and \(\mathrm{mean@32}\)
on AIME~2025 throughout training.

The outcome is mixed. By the end of the $420$-step budget, the
InsightReplay variant overtakes the baseline on both
\(\mathrm{mean@32}\) and \(\mathrm{best@32}\), indicating a late
advantage in average sample quality and upper-tail performance.
However, \(\mathrm{maj@32}\) remains consistently below the baseline,
suggesting that this training recipe does not improve sample-level
consistency in the same way.

We also emphasize that these experiments were stopped after exactly
$420$ steps rather than trained further. In addition, this direct
multi-round training mode is substantially more expensive
operationally. Each InsightReplay rollout must merge multiple
rounds into a single training example, and the prompt grows longer at
later rounds because more reasoning and insight tokens are carried
forward. As a result, token cost increases over the course of a single
rollout. For this reason, despite the late crossover on
\(\mathrm{mean@32}\) and \(\mathrm{best@32}\), we ultimately did not
adopt this training strategy in the main paper.

\begin{figure*}[t]
\centering
\includegraphics[width=\textwidth]{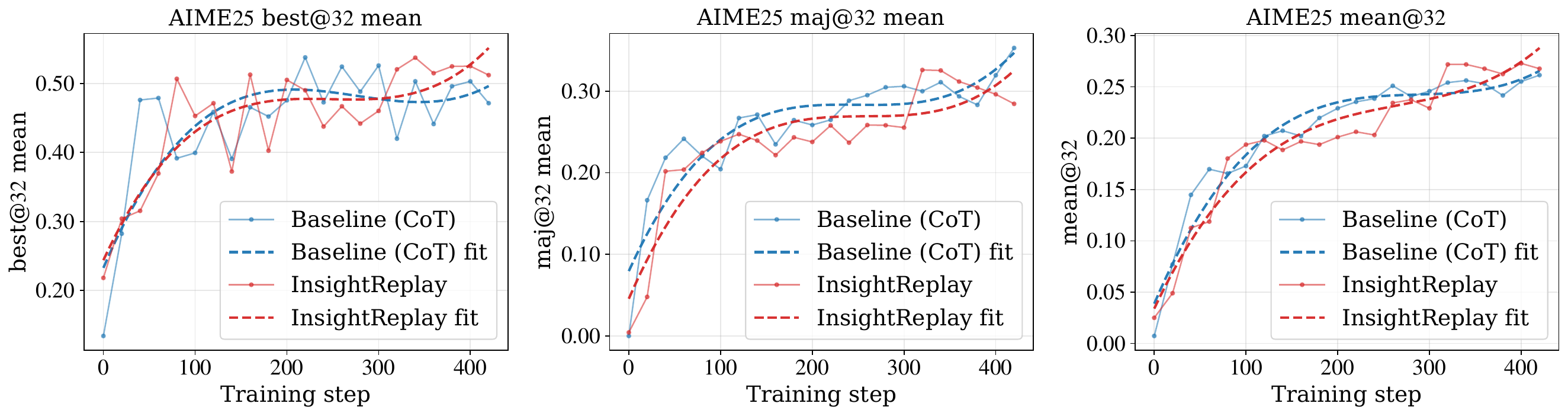}
\caption{Validation accuracy on AIME~2025 for the alternative training
method over a unified $420$-step budget. We report
\(\mathrm{best@32}\), \(\mathrm{maj@32}\), and \(\mathrm{mean@32}\)
throughout training.}
\label{fig:alternative-training-results}
\end{figure*}

\end{document}